\documentclass{article}
\usepackage[margin=1in]{geometry}
\usepackage[authoryear,round]{natbib}

\usepackage{hyperref}
\usepackage{url}

\usepackage{amsmath, amssymb, amsthm}
\usepackage{xcolor}
\usepackage[capitalize]{cleveref}
\usepackage{graphicx}
\usepackage{subcaption}
\usepackage{color}
\usepackage{float}
\usepackage{booktabs}
\usepackage{multirow}
\usepackage{keytheorems}

\newkeytheorem{lemma,theorem,proposition,remark,assumption,definition,corollary}

\crefname{assumption}{Assumption}{Assumptions}
\crefname{definition}{Definition}{Definitions}

\title{A Sharp Transition in Data Reconstruction \\ under Differential Privacy}
\author{
  Max Cairney-Leeming \\
  \small Institute of Science and Technology Austria \\
  \small \texttt{max.cairney-leeming@ist.ac.at}
  \and
  Simone Bombari \\
  \small Institute of Science and Technology Austria \\
  \small \texttt{simone.bombari@ist.ac.at}
  \and
  Marco Mondelli \\
  \small Institute of Science and Technology Austria \\
  \small \texttt{marco.mondelli@ist.ac.at}
}
\date{}

\def\epsilon{\varepsilon}

\DeclareMathOperator{\tr}{tr}
\newcommand{\opnorm}[1]{\left\lVert #1 \right\rVert_{\mathrm{op}}}
\newcommand{\evmin}[1]{\lambda_{\min}\!\left(#1\right)}
\newcommand{\evmax}[1]{\lambda_{\max}\!\left(#1\right)}
\newcommand{\svmin}[1]{s_{\min}\!\left(#1\right)}
\newcommand{\svmax}[1]{s_{\max}\!\left(#1\right)}

\def\x{\mathcal{X}}%

\newcommand{\reals}{\mathbb{R}}

\newcommand{\N}{\mathcal{N}}%

\newcommand{\ex}{\mathbb{E}}

\newcommand{\prob}{\mathbb{P}}

\newcommand{\argmin}{\operatorname*{arg\;min}}

\newcommand{\sign}{\operatorname{sign}}

\newcommand{\n}{\|}
\newcommand{\norm}{\|\cdot\|}%
\newcommand{\op}{\mathrm{op}}

\usepackage{xspace}
\makeatletter
\DeclareRobustCommand\onedot{\futurelet\@let@token\@onedot}
\def\@onedot{\ifx\@let@token.\else.\null\fi\xspace}

\newcommand{\iid}{{i.i.d}\onedot}
\newcommand{\eg}{{e.g}\onedot} 
\newcommand{\ie}{{i.e}\onedot}

\makeatother

\newcommand{\kl}{D_\mathrm{KL}}

\newcommand{\dd}{\;\xspace\mathrm{d}}

\begin{document}

\maketitle

\newcommand{\numPoints}{n}
\newcommand{\dimInputs}{d}

\newcommand{\fullPairsDS}{Z}
\newcommand{\restOfPairsDS}{Z_{-}}
\newcommand{\fullDS}{X}
\newcommand{\labels}{Y}
\newcommand{\restOfDS}{{X_-}}
\newcommand{\restOfLabels}{{y_-}}
\newcommand{\targetPoint}{x_t}
\newcommand{\targetLabel}{{y_t}}
\newcommand{\targetPair}{{z_t}}
\newcommand{\attackPoint}{\widehat{x}}
\newcommand{\reconstructionError}{%
  \min_{\tau\in[-1,1]}
\left\|\targetPoint-\tau\attackPoint\right\|_2^2}

\newcommand{\mech}{\mathcal{M}}
\newcommand{\attack}{\mathcal{A}}
\newcommand{\paramSpace}{\Theta}

\newcommand{\optWeights}{{\theta^*}}
\newcommand{\noisyWeights}{\tilde{\theta}} %
\newcommand{\dpNoise}{{b_\mathrm{DP}}}
\newcommand{\dpStdDev}{\sigma_{\mathrm{DP}}}
\newcommand{\restWeights}{{\theta_-}}
\newcommand{\trueWeights}{{\theta_\mathrm{true}}}

\newcommand{\labelNoise}{{\nu}} %

\newcommand{\labelStdDev}{{\zeta}} %

\newcommand{\featureRadiusBound}{{R}}
\newcommand{\clipConstant}{C_{\mathrm{clip}}}
\newcommand{\targetRadius}{{R_t}}
\newcommand{\huberLoss}{\ell_{\clipConstant}}

\newcommand{\huberLooGrad}{F_{-}}

\newcommand{\targetProj}{{x_{\parallel}}}
\newcommand{\targetProjRest}{{x_{\perp}}}
\newcommand{\attackProj}{{\widehat{x}_{\parallel}}}
\newcommand{\dataPar}{{X_{\parallel}}}
\newcommand{\dataPerp}{{X_{\perp}}}
\newcommand{\pcaDataPar}{{\widehat{X}_{\parallel}}}
\newcommand{\pcaDataPerp}{{\widehat{X}_{\perp}}}
\newcommand{\sampleFactor}{{\gamma_{n,s}}}
\newcommand{\pcaSampleFactor}{{\widehat{\gamma}_{n,s}}}
\newcommand{\bulkRatio}{{\tau_b}}

\newcommand{\pcaProjAttack}{v_\text{PCA}}
\newcommand{\eigengap}{g}
\newcommand{\pcaProj}{\widehat{P}}

\newcommand{\pcaPerpProj}{\widehat{Q}}
\newcommand{\Mdp}{M_{\text{dp}}}
\newcommand{\pcaError}{\delta_{\mathrm{PCA}}}

\newcommand{\channel}{\mathcal{C}}
\newcommand{\inputSpace}{\mathcal{Z}}
\newcommand{\rateChan}{\mathcal{R}}

\newcommand{\distChan}{\mathcal{D}}
\newcommand{\rateDist}{\mathfrak{r}}
\newcommand{\distRate}{\mathfrak{d}}

\begin{abstract}

Data reconstruction attacks have empirically been successful in
recovering training samples from learned
models, raising privacy concerns and motivating defenses with
guarantees that remain valid against future threats. While
differential privacy (DP) provides formal protection, choosing
the privacy budget remains a challenge: small budgets
severely reduce utility, but it is hard to quantify how large the
budget can be without allowing accurate
reconstruction.
In this work, we study informed attackers who aim to reconstruct a
single $d$-dimensional training sample from a $\rho$-zero-concentrated
DP model, knowing all other
training data. Our main contribution is to establish a sharp
transition at $\rho \asymp d$ for data reconstruction: on the one
hand, we derive entropy-based lower bounds for any private
mechanism and any attack,
characterizing a set of target priors for which reconstruction is
information-theoretically impossible for $\rho \ll d$; on the other
hand, we analyze a simple attack on private linear regression with
output perturbation, showing that reconstruction is practically
feasible for $\rho \gg d$. Remarkably, the transition moves to
$\rho \asymp s$ for data lying in an $s$-dimensional subspace,
demonstrating that the privacy
budget guaranteeing adequate protection must be assessed in terms of the effective dimension of the data. We 
validate our findings via experiments on synthetic data and natural images (CIFAR-10, ImageNet).

\end{abstract}

\section{Introduction}
\label{sec:intro}

Easy access to vast amounts of training data is a key ingredient in the success of modern deep learning, naturally encouraging the collection of data that may contain sensitive personal information or copyrighted material. Standard training methods often memorize such training data, making it possible for attackers to recover information about their content from the final model \citep{carlini2021, nasr2025scalable, cooper2025extracting}. For example, membership inference attacks are designed to determine whether a given target sample $x_t$ was included in the training set \citep{membershipinference, whitebox, carlini2022membership}, while \emph{data reconstruction} aims to recover the full $x_t$ from scratch \citep{haimReconstructingTrainingData2022a, buzaglo2023reconstructing}.

A way to address these risks is via \emph{differential privacy} (DP) \citep{dworkCalibratingNoiseSensitivity2006}, which has become the standard paradigm for provably safeguarding models from leaking private information \eg through unintended memorization. DP quantifies data protection through a numerical parameter ($\rho$ in \cref{def:zcdp}), which bounds the impact a single data point can have on the output of the algorithm. However, the noise required by DP training negatively affects performance \citep{Abadi2016}, thus limiting its adoption. %
In particular, this performance cost is generally noticeable for constant-order (i.e., not dependent on sample size or data dimensionality) privacy budgets $\rho \asymp 1$, a regime that is also the usual focus in the literature \citep{de2022, mckenna2025}\footnote{The cited work often focuses on $(\varepsilon, \delta)$-DP with $\varepsilon \asymp 1$. Given the conversion from $\rho$-zCDP to $(\varepsilon, \delta)$-DP \citep{bunConcentratedDifferentialPrivacy2016}, we interpret this regime as $\rho \asymp 1$, neglecting the logarithmic dependence on $\delta$. As an  example, a zCDP guarantee with $\rho \simeq 1.05$ implies the $(8, 10^{-5})$-DP guarantee reported by \citet{de2022}.}.
This scaling provides non-trivial trade-offs for membership inference attacks \citep{dong2022gaussian, mahloujifar2022optimal,pmlr-v37-kairouz15}: informally, attacks are possible when $\rho \gg 1$, and impossible when $\rho \ll 1$.

On the other hand, much larger privacy budgets yield negligible utility losses in some domains and still protect models from reconstruction attacks \citep{bhowmick2019protectionreconstructionapplicationsprivate, stock2022defendingreconstructionattacksrenyi, ziller2024reconciling}, motivating interest in the regime $\rho \gg 1$ \citep{cyffers2025setting}. %
In particular, \cite{balleReconstructingTrainingData2022} have shown that, for suitable priors on $x_t \in \mathbb{R}^d$, $\rho \ll d$ suffices to make reconstruction statistically impossible. However, for general priors, their bound relies on small-ball probabilities which may be hard to compute in practice (e.g., for natural language data). Most importantly, it is unclear whether the result is tight, in the sense that reconstruction attacks are possible for $\rho \gg d$.

Our work takes a step towards understanding when the privacy budget sharply characterizes the feasibility of reconstruction from the final model parameters, identifying a sharp transition at $\rho \asymp d$ in \emph{DP linear regression}. We focus on informed attackers with access to the training set (except for the target example), as well as the hyper-parameters of the \emph{output perturbation} algorithm used to train the model. More specifically, our contributions are the following:
\begin{enumerate}
\item We derive a lower bound on the expected reconstruction error in terms of the target's differential entropy $h(x_t)$, valid for any $\rho$-zCDP mechanism and any attack (\cref{theorem:rho-entropy-reconstruction}). This result implies that accurate reconstruction is impossible as long as $\rho \ll h(x_t)$. For the target distributions considered by \citet{balleReconstructingTrainingData2022}, or for distributions with independent entries, we have $h(x_t) \asymp d$, demonstrating the impossibility of reconstruction for $\rho \ll d$. %

\item We characterize the performance of an explicit reconstruction attack on a DP model trained via output perturbation (\cref{thm:direct-huber-reconstruction}). For a natural range of regularization parameters $\lambda$ and residual clipping thresholds $\clipConstant$, the attack succeeds with high probability whenever $\rho \gg d$, providing a tight converse to \cref{theorem:rho-entropy-reconstruction}.

\item We extend our results to data distributions with lower \emph{effective dimension}, in particular to data supported on an $s$-dimensional subspace of $\mathbb{R}^d$. We obtain an analogous entropy-based lower bound (\cref{theorem:low-dim-entropy-bound}) and derive a reconstruction attack that exploits knowledge of the subspace (\cref{thm:low-dim-huber-reconstruction}). Together, these results show that the relevant transition for data reconstruction in this setting occurs at $\rho \asymp s$.

\end{enumerate}

We validate our findings through numerical experiments on synthetic data (\cref{fig:combined-synth-viz,fig:pca-reconstruction-synthetic}) and natural images from CIFAR-10 and ImageNet (\cref{fig:imagenet,fig:visual-reconstruction,fig:pca-reconstruction-cifar10}),
demonstrating the wider generality of our results beyond the technical assumptions.
Overall, our work identifies data dimensionality as a relevant scale for assessing privacy budgets, providing a concrete basis for studying the risks and potential utility benefits of larger budgets.

\section{Related work}

\paragraph{Data reconstruction from non-private models.}
Large models are known to memorize individual training data within their %
parameters. Such data can be extracted through appropriate prompting in generative models \citep{carliniExtractDiffusion2023, nasr2025scalable, cooper2025extracting}, or \emph{reconstructed} directly from the parameters of classifiers \citep{haimReconstructingTrainingData2022a, buzaglo2023reconstructing, oz2024reconstructing,shenDataReconstructionIdentifiability2026}. In the theoretical literature, the problem of fully reconstructing the training set has been tackled in specific settings: the heuristic method of \citet{haimReconstructingTrainingData2022a} relies on the implicit bias of gradient descent in homogeneous networks \citep{lyu2020gradient, ji2020directional};  \citet{loo2024understanding} and \citet{iurada2025law} show that reconstruction is possible in, respectively, infinite-width shallow networks and sufficiently  over-parameterized models. %
If the attacker knows the rest of the dataset, an effective reconstruction method against a generalized linear model, trained without the clipping or noise required for privacy, has also been expressed in closed form by \cite {balleReconstructingTrainingData2022}.

\paragraph{Attacks on private models.}

For models trained with DP, the feasibility of membership inference attacks (MIAs) is closely linked to the privacy budget \citep{yeom2018privacy, pmlr-v37-kairouz15,mahloujifar2022optimal}, and there is a  %
connection to privacy auditing, as these results mean that successful membership attacks set lower bounds on the privacy guarantee of an algorithm
\citep{jagielski2020auditing, nasr2023tight, steinke2023privacy}. 
Reconstruction attacks have also been linked to privacy budgets, primarily through reconstruction robustness \citep{balleReconstructingTrainingData2022}. %
This framework has been extended to give stronger bounds for specific mechanisms \citep{hayesBoundingTrainingData2023,kaissis2023bounding}, to use $f$-DP \citep{kulynychUnifyingReIdentificationAttribute2025}, and to reconstruct multiple data points \citep{swanberg2026unifiedframeworkadversaryawaredifferential}. Additional related work has focused on recovering one batch from DP-SGD   \citep{liuDataReconstructionAttacks2025}, %
specializing reconstruction robustness to discrete alphabets \citep{guo2023fano}, or lower bounding the reconstruction error through the bias and variance of the %
attack \citep{guoBoundingTrainingData2022}, %
though requiring the privacy budget to be at most logarithmic in the diameter of the target prior to rule out reconstruction.
Our work combines impossibility guarantees based on the entropy of the target with a provably successful attack, thus establishing a sharp transition for data reconstruction.

\paragraph{Large privacy budgets.}
A classical convention in the DP literature is to take the privacy budget to be a moderately small constant \citep{Dwork2014}. %
This is consistent with theoretical analyses showing that MIAs fail when $\rho \ll 1$ \citep{bunConcentratedDifferentialPrivacy2016, pmlr-v37-kairouz15, dong2022gaussian}. When membership is not a sensitive attribute, much larger (possibly dimension-dependent) budgets provide meaningful protection while better preserving utility. This has been shown in the locally private setting \citep{bhowmick2019protectionreconstructionapplicationsprivate,sarwateRatedisortionPerspectiveLocal2014,kalantariRobustPrivacyUtilityTradeoffs2018}, in language models \citep{stock2022defendingreconstructionattacksrenyi}, and in medical imaging \citep{ziller2024reconciling}, with \citet{aerni2024evaluations} even finding that budgets well above $10^3$ can outperform other heuristic defenses against membership attacks. This evidence also partly motivates the recent position taken by \citet{cyffers2025setting}, who argues that larger budgets should not be dismissed in favor of uncertified protection methods. Our results provide a theoretical foundation for this perspective, identifying the data’s effective dimension as the scale for assessing how large privacy budgets can be while still protecting against accurate reconstruction.

\section{Preliminaries}
\label{sec:preliminaries}

\paragraph{Notation.}
We set $[n] := \{1,\ldots, n\}$. Given a vector $v$, let $\norm_2$ denote its
Euclidean norm. Given a matrix $A \in \reals^{n\times
m}$, let $\opnorm{A}$ be its operator (spectral) norm. Let
$h(a) = - \int p_a(x) \log p_a(x) \mathrm{d} x$ denote the
differential entropy of a random variable $a$.
We use the letters $x, \mathcal{X}$ to denote feature vectors and the
space of features;
$y, \mathcal{Y}$ for labels; and $z, \mathcal{Z}$ for feature-label
pairs.
Complexity notations $O$ ($\lesssim$), $\Omega$ ($\gtrsim$) and
$\Theta$ ($\asymp$) are meant for large input dimension
$d$ and sample size $n$.

\paragraph{Linear regression.}
Let $Z = (\fullDS,\labels)$ be a labeled training dataset, where
$\fullDS=[x_1,\ldots,x_{\numPoints}]^\top\in
\reals^{\numPoints\times\dimInputs}$ contains the training data
(features) on its
rows and $\labels=[y_1,\ldots,y_{\numPoints}]^\top\in
\reals^{\numPoints}$ contains the corresponding labels. We assume
features to have
bounded norm $\| x\|_2 \leq \featureRadiusBound$ and input-label pairs to be sampled \iid from a joint distribution
$P_{XY}$. We consider the \emph{linear regression} model
\begin{equation}\label{eq:datamodel}
  y_i=x_i^\top\trueWeights+\labelNoise_i,
\end{equation}
where $\trueWeights\in\reals^{\dimInputs}$ and
$\labelNoise_i$ is independent label noise.
The goal of DP linear regression is to output
an estimate that guarantees a privacy budget and
minimizes the \emph{test risk}, defined for
$\theta\in\reals^{\dimInputs}$ as
\begin{equation}\label{eq:P}
  \mathcal P(\theta)
  =\frac{1}{2}\ex_{(x,y)\sim P_{XY}}
  \left[\left(x^\top\theta-y\right)^2\right] 
\end{equation}
\paragraph{Differential privacy (DP) and output perturbation.}
We say that the datasets $\fullPairsDS$
and $\fullPairsDS'$ are \emph{adjacent} %
if they differ in a single input-label
pair. We quantify DP via $\rho$-zero-concentrated DP ($\rho$-zCDP)
which uses the $\alpha$-Rényi divergence between two probability
distributions $P, Q$: %
\begin{equation}
  D_\alpha(P\|Q)
  :=\frac{1}{\alpha-1}
  \log\int\left(\frac{\mathrm dP}{\mathrm dQ}\right)^\alpha
  \mathrm dQ,\qquad \alpha>1.
\end{equation}
\begin{definition}[$\rho$-zCDP
  \citep{bunConcentratedDifferentialPrivacy2016}]
  \label{def:zcdp}
  A randomized algorithm $\mech$ satisfies
  $\rho$-zCDP if, for every adjacent pair of datasets
  $\fullPairsDS, \fullPairsDS'$, and every $\alpha>1$,
  \begin{equation}
    D_\alpha\bigl(\mech(\fullPairsDS)\|\mech(\fullPairsDS')\bigr)
    \leq\rho\alpha.
  \end{equation}
\end{definition}

Fix a regularization parameter $\lambda>0$,  and a residual clipping threshold
$\clipConstant>0$.
Let $\huberLoss$ be the Huber loss, i.e., a square loss linearized
to be $\clipConstant$-Lipschitz:
\begin{equation}
  \huberLoss(a):=
  \begin{cases}
    a^2/2, & |a|\leq\clipConstant,\\
    \clipConstant|a|-\clipConstant^2/2,
    & |a|>\clipConstant.
  \end{cases}
\end{equation}
Writing $\fullPairsDS=(\fullDS,\labels)$, output perturbation first computes the
regularized empirical-risk minimizer
\begin{equation}\label{eq:clippedminimization}
  \optWeights(\fullPairsDS):=
  \argmin_{\theta\in\reals^{\dimInputs}}
  \left\{
    \frac{1}{\numPoints}\sum_{i=1}^{\numPoints}
    \huberLoss\!\left(
      x_i^\top\theta-y_i
    \right)
    +\frac{\lambda}{2}\|\theta\|_2^2
  \right\},
\end{equation}
and then releases
\begin{equation}\label{eq:output-perturbation}
  \noisyWeights=\optWeights(\fullPairsDS)+\dpNoise,
  \qquad
  \dpNoise\sim\N(0,\dpStdDev^2 I_{\dimInputs}).
\end{equation}

\begin{proposition}[store=prop:output-pert-is-private]
  \label{prop:output-pert-is-private}
  The output perturbation algorithm defined by
  \eqref{eq:clippedminimization}-\eqref{eq:output-perturbation} is
  $\rho$-zCDP provided
  \begin{equation}
    \label{eq:output-perturbation-noise-calibration}
    \dpStdDev^2
    =
    \frac{2\featureRadiusBound^2\clipConstant^2}
    {\lambda^2\rho\numPoints^2}.
  \end{equation}

\end{proposition}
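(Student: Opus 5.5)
The plan is the standard Gaussian-mechanism argument: bound the $\ell_2$-sensitivity of the map $\fullPairsDS\mapsto\optWeights(\fullPairsDS)$, then use the closed form of the Rényi divergence between isotropic Gaussians with equal covariance. For $P=\N(\mu_1,\dpStdDev^2 I_{\dimInputs})$ and $Q=\N(\mu_2,\dpStdDev^2 I_{\dimInputs})$ we have $D_\alpha(P\|Q)=\alpha\|\mu_1-\mu_2\|_2^2/(2\dpStdDev^2)$ for every $\alpha>1$. If $\Delta:=\sup\|\optWeights(\fullPairsDS)-\optWeights(\fullPairsDS')\|_2$ over adjacent pairs, the mechanism is therefore $\rho$-zCDP as soon as $\Delta^2/(2\dpStdDev^2)\le\rho$. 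Substituting \eqref{eq:output-perturbation-noise-calibration}, the whole claim reduces to showing
\begin{equation*}
  \Delta\le\frac{2\featureRadiusBound\clipConstant}{\lambda\numPoints}.
\end{equation*}

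To bound the sensitivity, write the objective in \eqref{eq:clippedminimization} as $G(\theta)=\frac1n\sum_i\huberLoss(x_i^\top\theta-y_i)+\frac\lambda2\|\theta\|_2^2$. It is $\lambda$-strongly convex, because $\huberLoss$ is convex, so the minimizer $\optWeights$ exists and is unique. For adjacent datasets, let $G$ and $G'$ differ only in the term for index $j$, with pairs $(x_j,y_j)$ and $(x_j',y_j')$, and write $\theta=\optWeights(\fullPairsDS)$, $\theta'=\optWeights(\fullPairsDS')$. The first-order conditions give $\nabla G(\theta)=0=\nabla G'(\theta')$. Strong monotonicity of $\nabla G$ then yields
\begin{equation*}
  \lambda\|\theta-\theta'\|_2^2\le\langle\nabla G(\theta)-\nabla G(\theta'),\theta-\theta'\rangle=\langle\nabla G'(\theta')-\nabla G(\theta'),\theta-\theta'\rangle.
\end{equation*}
The difference of gradients is $\frac1n\big(\huberLoss'(x_j'^\top\theta'-y_j')x_j'-\huberLoss'(x_j^\top\theta'-y_j)x_j\big)$. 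Since $|\huberLoss'|\le\clipConstant$ (the derivative is the clipped residual) and $\|x\|_2\le\featureRadiusBound$, its norm is at most $2\featureRadiusBound\clipConstant/n$. Applying Cauchy--Schwarz and dividing by $\|\theta-\theta'\|_2$ gives $\Delta\le 2\featureRadiusBound\clipConstant/(\lambda n)$, which is exactly what is needed.

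The main care point is this sensitivity step. Two things must be checked there. First, the Huber loss is differentiable with a derivative bounded by $\clipConstant$ in absolute value, so the first-order conditions hold without subgradients. Second, the bound must not depend on the labels; this is why clipping the residual, rather than bounding $y$, matters, since labels are unbounded under \eqref{eq:datamodel}. The factor $2$ comes from having one point replaced rather than removed, and it matches the constant in \eqref{eq:output-perturbation-noise-calibration}. Finally, because the noise is data-independent, the Rényi bound holds for every $\alpha>1$ and every adjacent pair, which is precisely \cref{def:zcdp}.
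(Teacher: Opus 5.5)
Your proposal is correct and follows essentially the same route as the paper: first a sensitivity bound $\|\optWeights(\fullPairsDS)-\optWeights(\fullPairsDS')\|_2\le 2\featureRadiusBound\clipConstant/(\lambda\numPoints)$, obtained from the first-order conditions and monotonicity of the Huber gradient, and then the closed-form R\'enyi divergence between equal-covariance Gaussians. The only cosmetic difference is that you use $\lambda$-strong monotonicity of $\nabla G$ and evaluate both differing per-example gradients at $\theta'$, whereas the paper subtracts the two optimality conditions, applies monotonicity only to the $n-1$ shared terms, and bounds $g_1(\theta)$ and $g_1'(\theta')$ separately; both give the same constant.
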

This result follows standard arguments on the Gaussian mechanism
applied to minimization algorithms with bounded sensitivity
\citep{mironovRenyiDifferentialPrivacy2017}, and its proof is
deferred to \cref{subsec:output-pert-sens}. 
Intuitively, larger $\lambda$ (or smaller
$\clipConstant$) implies that the effect of a single training
sample on the resulting $\theta^*(Z)$ is smaller. Then, to guarantee
privacy, Gaussian noise is added proportionally to $1 / \rho$ and
$R^2 = \sup_x \| x \|_2^2$. We will refer to $\clipConstant$ and
$\lambda$ as the hyper-parameters of output perturbation. %

\paragraph{Reconstruction threat model.}

We follow the same threat model as
\cite{balleReconstructingTrainingData2022}. A trusted processor has a
dataset $\fullPairsDS=(
\fullDS,\labels)$, decomposed into a target point
$\targetPair=(\targetPoint,\targetLabel)$ and the rest of the
dataset, $\restOfPairsDS=(\restOfDS,\restOfLabels)$. For convenience,
we associate $t\in [n]$ with an index in the data. %
The processor releases the
output $\mech(\fullPairsDS) \in \paramSpace$ of a
$\rho$-zCDP mechanism on the dataset.
The attacker then develops a function $
\attack:\inputSpace^{\numPoints-1}\times\paramSpace \to\x$ to
reconstruct the target feature $\targetPoint \in \x$
from $\mech(\fullPairsDS) $ and all other datapoints
$\restOfPairsDS$. We note that the attacker also has knowledge of the
mechanism's hyperparameters $\lambda$ and $\clipConstant$. %
The resulting reconstruction is $ \attackPoint
:=\attack(\restOfPairsDS,\mech(\fullPairsDS))$.

\section{Main results}

\subsection{Information-theoretic lower bound}\label{sec:full_dim}

We first show a lower bound on the expected error
an attacker must incur when
reconstructing the target point, regardless of
$(\restOfDS,\restOfLabels)$, the $\rho$-zCDP mechanism, or the reconstruction method.

\begin{theorem}[store=rho-entropy-reconstruction]
  \label{theorem:rho-entropy-reconstruction}
  Let the target feature $\targetPoint$ be independent of the rest of
  the dataset $\restOfPairsDS=(\restOfDS,\restOfLabels)$. Let
  $\attack:\inputSpace^{\numPoints-1}\times\paramSpace\to\x$ be an
  attack, taking as input
  $\restOfPairsDS$ and the output
  $\mech(\restOfPairsDS\cup\{\targetPair\})$ of a
  $\rho$-zCDP mechanism, and producing as output an estimate
  $\attackPoint$ of $\targetPoint$.
  Then,
  \begin{equation}\label{eq:lower-bound-scale-invariant}
    \frac{\ex \n \targetPoint - \attackPoint \n_2^2}{\ex \n
    \targetPoint\n_2^2} \geq \frac{1}{2 \pi e}
    \exp\left(\frac{2}{d}\left(h(\targetPoint') -\rho\right)\right),
  \end{equation}
  with %
  $\targetPoint' :=
  \frac{\sqrt{d}}{\sqrt{\ex \n \targetPoint\n_2^2}} \targetPoint$.
\end{theorem}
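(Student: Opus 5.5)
Our plan is the standard information-theoretic chain: mutual information bounded by the privacy budget, followed by an entropy-power (maximum-entropy) argument converting mutual information into mean-squared error.

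\emph{Step 1 (mutual information bound).} Condition on $\restOfPairsDS$ and let $\Theta := \mech(\restOfPairsDS\cup\{\targetPair\})$. Since $\targetPoint$ is independent of $\restOfPairsDS$, we have $I(\targetPoint;\Theta\mid \restOfPairsDS) = \ex_{\targetPoint}\bigl[\kl(P_{\Theta\mid \targetPoint,\restOfPairsDS}\,\|\,P_{\Theta\mid\restOfPairsDS})\bigr]$. The marginal $P_{\Theta\mid\restOfPairsDS}$ is a mixture over independent copies $\targetPoint''$ of $\mech(\restOfPairsDS\cup\{z''\})$. By convexity of KL in its second argument, this quantity is at most $\ex_{\targetPoint,\targetPoint''}\,\kl(\mech(Z)\|\mech(Z''))$, where $Z$ and $Z''$ are adjacent. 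Letting $\alpha\downarrow 1$ in \cref{def:zcdp} gives $\kl\le \rho$. Hence $I(\targetPoint;\Theta,\restOfPairsDS)=I(\targetPoint;\Theta\mid\restOfPairsDS)\le\rho$. If the label $\targetLabel$ depends on $\targetPoint$, the same argument applies with the whole pair $\targetPair$, and $I(\targetPoint;\cdot)\le I(\targetPair;\cdot)$. By data processing, $I(\targetPoint;\attackPoint)\le\rho$.

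\emph{Step 2 (scale reduction).} Both sides of \eqref{eq:lower-bound-scale-invariant} are invariant under rescaling the target together with the attacker's estimate: replacing $\targetPoint,\attackPoint$ by $c\targetPoint, c\attackPoint$ with $c=\sqrt d/\sqrt{\ex\n\targetPoint\n_2^2}$ leaves the left side unchanged. Mutual information is also unchanged by this invertible map. So it suffices to show $\ex\n\targetPoint'-\attackPoint'\n_2^2\ge \frac{d}{2\pi e}\exp\bigl(\frac2d(h(\targetPoint')-\rho)\bigr)$, where $\ex\n\targetPoint'\n_2^2=d$.

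\emph{Step 3 (entropy to error).} Write $h(\targetPoint'\mid\attackPoint') = h(\targetPoint')-I(\targetPoint';\attackPoint')\ge h(\targetPoint')-\rho$. Translation invariance gives $h(\targetPoint'\mid\attackPoint') = h(\targetPoint'-\attackPoint'\mid\attackPoint')\le h(\targetPoint'-\attackPoint')$. The Gaussian maximizes entropy under a second-moment constraint, so with $D:=\ex\n\targetPoint'-\attackPoint'\n_2^2$ we get $h(\targetPoint'-\attackPoint')\le \frac d2\log\bigl(2\pi e D/d\bigr)$. Rearranging yields $D/d\ge \frac1{2\pi e}\exp\bigl(\frac2d(h(\targetPoint')-\rho)\bigr)$, which is the claim.

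The main obstacle is Step 1: correctly justifying that zCDP bounds the conditional mutual information by $\rho$. This requires care with the conditioning on $\restOfPairsDS$, with taking the $\alpha\to1$ limit (one can use monotonicity of $D_\alpha$ in $\alpha$, so $\kl\le D_\alpha\le\rho\alpha$ for every $\alpha>1$), and with measurability of the mixture argument. Degenerate cases are harmless: if $h(\targetPoint)=-\infty$ the bound is trivial, and if $\attackPoint$ is randomized the extra randomness is independent of $\targetPoint$ given $(\Theta,\restOfPairsDS)$, so data processing still applies.
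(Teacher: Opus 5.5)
Your proposal is correct and follows the paper's proof essentially step for step. Both use the zCDP-to-mutual-information bound via $\alpha\downarrow 1$ and convexity of KL in its second argument (\cref{lem:zcdp-mut-info}), then data processing, then the maximum-entropy bound of \cref{lem:squared-l2-rate-dist}. The only differences are minor: you normalize the scale up front rather than applying $h(AX)=h(X)+\log|\det A|$ at the end, and your monotonicity argument ($\kl\le D_\alpha\le\rho\alpha$ for every $\alpha>1$) is a slightly cleaner justification of the limit than the paper's appeal to continuity of $D_\alpha$.
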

We note that the bound in \eqref{eq:lower-bound-scale-invariant} is
scale-invariant,
in the sense that neither the LHS or RHS depend on the norm of $x_t$. Since
the LHS of \eqref{eq:lower-bound-scale-invariant} is $1$ for the
all-0 estimator and $0$ for $\hat x=x_t$, we conclude that any attack
fails whenever $\rho$ is small compared to the differential entropy
of the (normalized) point to
be reconstructed. This dependence on $h(\targetPoint')$ is natural:
the differential entropy quantifies the prior uncertainty in the target, with normalization removing its dependency on the scale of the distribution, and $\rho$ bounds the information that the mechanism can reveal.

When $h(x_t')=\Theta(d)$, Theorem
\ref{theorem:rho-entropy-reconstruction} implies that, if $\rho\ll
d$, reconstructing $\targetPoint$ accurately is
information-theoretically impossible. This is the case e.g.\ for isotropic Gaussians, for the uniform distribution on a ball, and for distributions with independent entries, each with marginal entropy bounded away from $0$. 
Moreover,
while \eqref{eq:lower-bound-scale-invariant} bounds normalized mean squared error in terms of $h(x_t')$,
similar bounds can be obtained for \emph{(i)} the per-coordinate error $\ex \n \targetPoint - \attackPoint \n_2^2 / d$ in terms of $h(\targetPoint)$, and for \emph{(ii)} different norms/orders (\eg, $\ex \n \targetPoint -
\attackPoint \n_2$) whenever the max-entropy distribution under
bounded distortion is computable. The proof of Theorem
\ref{theorem:rho-entropy-reconstruction} is in Appendix
\ref{sec:lower-bounds-entropy-mi}, with a sketch given below.

\paragraph{Proof sketch.} Let us consider normalized features
$\mathbb E [\| x_t \|_2^2] = d$ for simplicity, so that $x_t' = x_t$.
The idea is to combine an upper bound on
the mutual information coming from the privacy guarantee with a
rate-distortion inequality. First, in Lemma \ref{lem:zcdp-mut-info},
we show that $I(\targetPair; \mech(\restOfPairsDS \cup \{\targetPair\})) \leq \rho$.
This follows by taking the R\'enyi order $\alpha\downarrow1$ in the zCDP
condition and using convexity of the KL divergence to bound mutual information.
As shown by \cite{bunConcentratedDifferentialPrivacy2016}, an
application of the data processing inequality, combined with the
independence of $\targetPoint$ and $\restOfPairsDS$,
then implies
\begin{equation}\label{eq:sketch1}
  I(\targetPoint;\attackPoint)\leq\rho.
\end{equation}
Writing $D=\ex\|\targetPoint-\attackPoint\|_2^2$ and using that the
Gaussian distribution maximizes the differential entropy when the
second moment is constrained, in
\cref{lem:squared-l2-rate-dist} we show that
\begin{equation}\label{eq:sketch2}
  h(\targetPoint)-\frac{d}{2}\log\left(\frac{2\pi e D}{d}\right)
  \leq I(\targetPoint;\attackPoint).
\end{equation}
Combining \eqref{eq:sketch1}-\eqref{eq:sketch2} yields the claimed
lower bound. \qed

\paragraph{Comparison with prior bounds.}
\citet[Corollary~4]{balleReconstructingTrainingData2022} bound the
probability of reconstruction within a radius $r$ through
the small-ball probability
$\sup_v\prob(\|\targetPoint-v\|_2\leq r)$. %
In contrast, our bound is on the
mean squared error and the distribution of $x_t$ enters through its
differential entropy.
We note that the examples of a uniform and Gaussian distribution in
Propositions 6-7 of \cite{balleReconstructingTrainingData2022} also
show that $\rho\ll d$ guarantees the impossibility of reconstructing
training data. However, Fig.\
\ref{fig:comparison-reconstruction-bounds} demonstrates that the bound of
Theorem \ref{theorem:rho-entropy-reconstruction} is tighter for the Gaussian case and also for the uniform case unless $\rho/d$ approaches $0$. The improvement is particularly noticeable for moderate values of $\rho/d$, i.e., when a strong privacy
guarantee is required.

\begin{figure}[t]
  \centering
  \includegraphics[width=.9\linewidth]{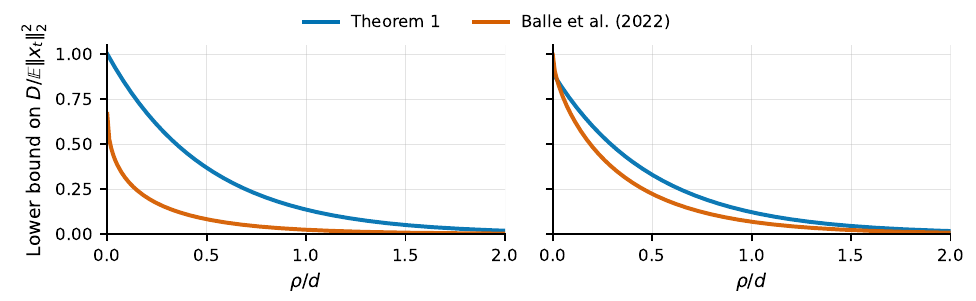}\par
  \vspace{-1.5em}
  \noindent\makebox[\linewidth][l]{%
    \hspace*{0.11\linewidth}%
    \subcaptionbox{Gaussian}[0.386\linewidth]{\rule{0pt}{0pt}}%
    \hspace{0.046\linewidth}%
    \subcaptionbox{Uniform in ball radius $\sqrt
    d$}[0.386\linewidth]{\rule{0pt}{0pt}}%
  }
  \vspace{-0.5em}
  \caption{Lower bounds on the normalized mean squared error $D/\ex
    \| \targetPoint\|_2^2$, with
    $D=\ex\|\targetPoint-\attackPoint\|_2^2$. We take $d=20$ and let
    the distribution of $x_t$ be standard Gaussian (left) or uniform
    on the ball of radius
    $\sqrt d$ (right), following the setting of Propositions 6-7 of
    \cite{balleReconstructingTrainingData2022}. The guarantees of
    \cite{balleReconstructingTrainingData2022} are translated into
    bounds on the mean squared error using
    the Gaussian Chernoff and uniform-ball volume bounds, see
    Appendix \ref{app:comparison-bounds} for details.
  }
  \label{fig:comparison-reconstruction-bounds}
\end{figure}

\subsection{Algorithmic upper bound}\label{sec:algo}

We next design an attack on the linear model in \eqref{eq:datamodel} privatized
via output perturbation (see \eqref{eq:clippedminimization}), and give provable guarantees on its reconstruction error. %

\begin{assumption}%
  \label{ass:full-dim-attack-assumption}
  Each feature datapoint $x \in \reals^d$ in $X$ is independently and
  identically drawn from a $O(1)$-subGaussian
  distribution  with
  bounded norm $\| x \|_2 \leq \featureRadiusBound$ for
  $\featureRadiusBound=\Theta(\sqrt{\dimInputs})$. Furthermore,
 we assume $\| x_t \|_2 = \Theta(\sqrt d)$ and $\labelNoise_i\sim
  \N(0,\labelStdDev^2)$.
\end{assumption}

In words, we require the data to have well-behaved tails, as commonly
done in related work
\citep{brown2024private, iurada2025law}.
We also note that this assumption can be relaxed to requiring $\n
\restOfDS^\top \restOfDS\n_{\op} = O(n+d)$.
The normalization $R = \Theta(\sqrt d)$ is chosen for convenience,
and it corresponds to taking the entries of $x$ of
constant order. The Gaussian assumption on $\nu_i$ is also chosen for
simplicity, as it can be relaxed to requiring sufficient
anti-concentration of the noise.

Note that the derivative
of the Huber loss
is $\psi_{\clipConstant}(a)
  :=\huberLoss'(a)
  =\sign(a)\min\{|a|,\clipConstant\}$.
Given the
non-target data,
define the leave-one-out gradient
\begin{equation}
  \label{eq:huber-loo-gradient-map}
  \huberLooGrad(\theta)
  :=\sum_{i \in [n], i\neq t}
  x_i\psi_{\clipConstant}(x_i^\top\theta-y_i)
  +n\lambda\theta.
\end{equation}
For $\lambda > 0$, the regularized objective in the RHS of
\eqref{eq:clippedminimization} is strongly
convex, and it has a unique minimizer $\optWeights$.
The first-order optimality condition thus gives %
\begin{equation}
  \label{eq:huber-loo-grad-at-full-opt-weights}
  -\huberLooGrad(\optWeights)
  =\targetPoint\,
  \psi_{\clipConstant}
  (\targetPoint^\top\optWeights-\targetLabel).
\end{equation}
Then, given access to $\optWeights$, through the
knowledge of $\restOfDS$,
$\restOfLabels$, $\clipConstant$ and $\lambda$, one can
compute $\huberLooGrad(\optWeights)$. This quantity, assuming
$\psi_{\clipConstant} (\targetPoint^\top\optWeights-\targetLabel)
\neq 0$, is an
exact estimator of the direction of $\targetPoint$, due to
\eqref{eq:huber-loo-grad-at-full-opt-weights}.
As the attacker has access to the perturbed weights
$\noisyWeights$ (and not to $\optWeights$), it is natural to pick
$\huberLooGrad(\noisyWeights)$ to estimate the direction of $\targetPoint$. %
The performance of this
reconstruction attack is characterized below.

\begin{theorem}
  [store=thm:direct-huber-reconstruction]
  \label{thm:direct-huber-reconstruction}
  Let \cref{ass:full-dim-attack-assumption} hold, and %
  let  $\lambda>0$,
  $\clipConstant>0$ and $\rho>0$ be the parameters of the algorithm
  chosen by the learner.
  The attacker computes
  \begin{equation}
    \label{eq:attack-definition}
    \attackPoint:=\featureRadiusBound\,\frac{\huberLooGrad(\noisyWeights)}{\|\huberLooGrad(\noisyWeights)\|_2}.
  \end{equation}
  Let $\eta\in(0,1)$ be
  a failure probability and $C , c > 0$ be absolute constants. %
  Then, with
  probability at least
  $1-\eta-2e^{-c n} - 2 e^{-c d}$,
  \begin{equation}
    \label{eq:huber-reconstruction-bound}
    \frac{\reconstructionError}{\| x_t \|_2^2}
    \leq
    C
    \left[
      \max\left\{1,
        \frac{\clipConstant}{\labelStdDev\eta}
        \left(1+\frac{d}{n\lambda}\right)
      \right\}
      \left(1+\frac1\lambda\right)
      \left(1+\frac dn\right)
    \right]^2
    \frac{d}{\rho}.
  \end{equation}
\end{theorem}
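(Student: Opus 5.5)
The strategy is to view the attack as a perturbation of an exact direction estimator. By \eqref{eq:huber-loo-grad-at-full-opt-weights}, $-\huberLooGrad(\optWeights)=\targetPoint\,a$ with $a:=\psi_{\clipConstant}(\targetPoint^\top\optWeights-\targetLabel)$. We write
\begin{equation*}
  \huberLooGrad(\noisyWeights)=-a\,\targetPoint+\Delta,
  \qquad
  \Delta:=\huberLooGrad(\noisyWeights)-\huberLooGrad(\optWeights).
\end{equation*}
For a vector $u=-a\targetPoint+\Delta$ and $\attackPoint=\featureRadiusBound u/\|u\|_2$, an elementary geometric lemma gives the following. If $\|\Delta\|_2\le |a|\|\targetPoint\|_2$, then choosing $\tau=\mp\|\targetPoint\|_2/\featureRadiusBound\in[-1,1]$ yields
\begin{equation*}
  \min_\tau\|\targetPoint-\tau\attackPoint\|_2^2/\|\targetPoint\|_2^2\lesssim \|\Delta\|_2^2/(a^2\|\targetPoint\|_2^2).
\end{equation*}
In the opposite regime the left side is at most $1$, and the right side of \eqref{eq:huber-reconstruction-bound} exceeds $1$ anyway. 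This is where the freedom in $\tau$ and the normalization by $\featureRadiusBound\ge\|\targetPoint\|_2$ are used. It therefore suffices to upper bound $\|\Delta\|_2$ and lower bound $|a|$.

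\textbf{Bounding $\Delta$.} Since $\psi_{\clipConstant}$ is $1$-Lipschitz,
\begin{equation*}
  \|\Delta\|_2\le\bigl(\opnorm{\restOfDS^\top\restOfDS}+n\lambda\bigr)\|\dpNoise\|_2 .
\end{equation*}
Under \cref{ass:full-dim-attack-assumption}, standard subGaussian matrix concentration gives $\opnorm{\restOfDS^\top\restOfDS}\lesssim n+d$ with probability $1-2e^{-cn}$ (or $e^{-cd}$). Gaussian norm concentration gives $\|\dpNoise\|_2\lesssim\sqrt d\,\dpStdDev$ with probability $1-2e^{-cd}$. Plugging in \eqref{eq:output-perturbation-noise-calibration} with $\featureRadiusBound\asymp\sqrt d$ gives
\begin{equation*}
  \|\Delta\|_2\lesssim (n+d+n\lambda)\,\frac{d\,\clipConstant}{\lambda n\sqrt\rho}
  \;\asymp\;\clipConstant\sqrt{d}\,\Bigl(1+\frac1\lambda\Bigr)\Bigl(1+\frac dn\Bigr)\frac{\sqrt d}{\sqrt\rho}.
\end{equation*}
Dividing by $\|\targetPoint\|_2\asymp\sqrt d$, the ratio $\|\Delta\|_2/\|\targetPoint\|_2$ is $\clipConstant(1+1/\lambda)(1+d/n)\sqrt{d/\rho}$ up to constants.

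\textbf{Lower bounding $|a|$ (main obstacle).} We need $|a|\gtrsim\min\{\clipConstant,\ \labelStdDev\eta/(1+d/(n\lambda))\}$ with probability $1-\eta$. This reproduces the $\max\{1,\cdot\}$ factor after dividing $\clipConstant$ by $|a|$. The difficulty is that $\optWeights$ depends on $\targetLabel$, so the residual $r=\targetPoint^\top\optWeights-\targetLabel$ is not simply Gaussian. The plan is to condition on $\restOfPairsDS$, $\targetPoint$ and $\trueWeights$, and to view $r$ as a function of the label noise $\labelNoise_t$ alone. Define $g(\nu)=\targetPoint^\top\optWeights(\nu)-\targetPoint^\top\trueWeights-\nu$. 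By implicit differentiation of the optimality condition, $\partial\optWeights/\partial\nu = H^{-1}\targetPoint\,\psi'_{\clipConstant}(r)/n$, where $H\succeq\lambda I$ is the Hessian. Hence
\begin{equation*}
  g'(\nu)=-1+\psi'_{\clipConstant}(r)\,\targetPoint^\top H^{-1}\targetPoint/n .
\end{equation*}
The mechanism is that the self-influence of the target, $\targetPoint^\top H^{-1}\targetPoint/n\le \|\targetPoint\|_2^2/(n\lambda)\lesssim d/(n\lambda)$, can only shrink the residual relative to $\nu$. More precisely, a Sherman–Morrison type computation should give $|g'|\ge 1/(1+\targetPoint^\top H_-^{-1}\targetPoint/n)\gtrsim 1/(1+d/(n\lambda))$ on the unclipped region, with $H_-$ the leave-one-out Hessian. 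Thus $g$ is monotone with derivative bounded below in absolute value, so $\{\nu:|g(\nu)|\le\epsilon\}$ is an interval of length at most $\epsilon(1+d/(n\lambda))$. The Gaussian density of $\labelNoise_t$ is at most $1/(\sqrt{2\pi}\labelStdDev)$, so
\begin{equation*}
  \prob\bigl(|r|\le\epsilon\bigr)\le\epsilon(1+d/(n\lambda))/\labelStdDev,
\end{equation*}
and choosing $\epsilon\asymp\eta\labelStdDev/(1+d/(n\lambda))$ gives $|r|\ge\epsilon$ with probability $1-\eta$. Then $|a|=\min\{|r|,\clipConstant\}$. Handling the non-smooth points of $\psi_{\clipConstant}$ rigorously, via monotonicity of $g$ on each piece or by approximating with a smooth Huber, is the step I expect to require the most care.

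\textbf{Combining.} A union bound over the three events, followed by squaring the ratio $\|\Delta\|_2/(|a|\|\targetPoint\|_2)$, gives \eqref{eq:huber-reconstruction-bound}.
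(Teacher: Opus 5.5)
Your proposal is correct, and its skeleton matches the paper's. You use the same signal-plus-noise decomposition of $F_-(\tilde\theta)$ and the same bound $\|\Delta\|_2\le(\|X_-^\top X_-\|_{\mathrm{op}}+n\lambda)\|b_{\mathrm{DP}}\|_2$, followed by Gram-matrix and Gaussian-norm concentration. You also use a normalization lemma in which $\tau$ absorbs the mismatch between $R$ and $\|x_t\|_2$, and close with a union bound.

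The genuine difference is how you lower-bound $|a|$. The paper never differentiates. It introduces the leave-one-out minimizer $\theta_-$, with regularization $n\lambda/(n-1)$ so that $F_-(\theta_-)=0$. It then uses the secant identity $F_-(\theta_1)-F_-(\theta_2)=M(\theta_1-\theta_2)$ with $M\succeq n\lambda I$ to get $r_t(\theta_-)=r_t(\theta^*)+x_t^\top M^{-1}x_t\,\psi_{C_{\mathrm{clip}}}(r_t(\theta^*))$. This gives $|r_t(\theta_-)|\le(1+R^2/(n\lambda))\,|r_t(\theta^*)|$. Gaussian anticoncentration is then applied to $r_t(\theta_-)$, which is conditionally Gaussian because $\theta_-$ does not involve $y_t$.

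Your implicit-differentiation/Sherman--Morrison computation is the infinitesimal form of the same fact. The estimator $\theta_-$ is exactly $\theta^*$ evaluated at the noise value $\nu_0$ where the target residual vanishes. So the paper's inequality is your slope bound integrated from $\nu_t$ to $\nu_0$. Your version gives a more explicit picture: the residual is strictly decreasing in $\nu_t$, with slope in $[-1,-1/(1+s)]$. The cost is the non-smoothness you flag. It affects not only the target's kink but also the other $n-1$ residuals, so the Hessian exists only almost everywhere.

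You can avoid smoothing altogether by working in secant form. Subtract the optimality conditions at two noise values $\nu_1,\nu_2$ and apply the same secant-matrix identity. This gives $(1+c\delta)(r_1-r_2)=-(\nu_1-\nu_2)$, with $c=x_t^\top M^{-1}x_t\in[0,\|x_t\|_2^2/(n\lambda)]$ and $\delta\in[0,1]$. Hence $|r_1-r_2|\ge|\nu_1-\nu_2|/(1+\|x_t\|_2^2/(n\lambda))$ for all $\nu_1,\nu_2$, which is all your argument needs.

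Two minor points:
\begin{itemize}
\item The preimage of $[-\epsilon,\epsilon]$ has length up to $2\epsilon(1+d/(n\lambda))$, not $\epsilon(1+d/(n\lambda))$. This is harmless, since the density bound $1/(\sqrt{2\pi}\zeta)$ absorbs the factor $2$.
\item Your case split in the geometric lemma is unnecessary, because $\min_{\tau}\|x_t-\tau\hat x\|_2^2\le 4\|\Delta\|_2^2/a^2$ holds unconditionally.
\end{itemize}
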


Theorem \ref{thm:direct-huber-reconstruction} states that, for large enough privacy parameter $\rho$, the estimate
$\huberLooGrad(\tilde \theta)$ is approximately aligned to the target
$x_t$. We note that the remaining sign and scale ambiguities may be easily resolvable in practice: the attacker has only two signs to consider (which would reconstruct the original image or its negative), and prior knowledge of the typical sample norm is a natural choice of scale.
Formally, we remark that it is also possible to extend the lower bound in Theorem \ref{theorem:rho-entropy-reconstruction} to the metric considered here, without losing the dependence on the dimension.

\begin{figure}[t]
  \includegraphics[width=\linewidth]{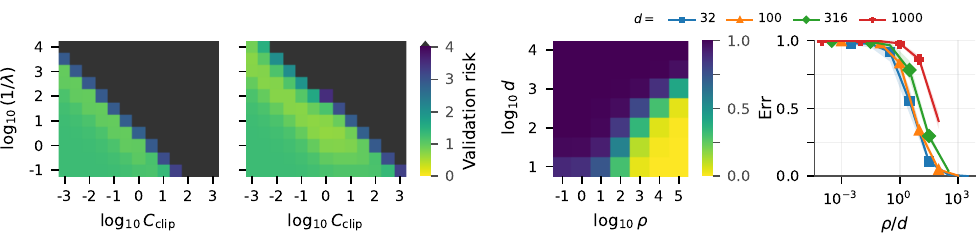}\par
  \vspace{-2em}
  \captionsetup[subfigure]{justification=centering,singlelinecheck=false}
  \noindent\makebox[\linewidth][l]{%
    \subcaptionbox{\label{fig:combined-synth-hyperparameter-search}
    Hyperparameter search at $\rho=10$ and $\rho=10^4$}%
    [0.48\linewidth]{\rule{0pt}{0pt}}\hfill
    \subcaptionbox{\label{fig:combined-synth-reconstruction}
    Reconstruction error relative to $\rho$ and $d$}%
    [0.48\linewidth]{\rule{0pt}{0pt}}%
  } 

\caption{Hyper-parameter selection and reconstruction, for synthetic data sampled uniformly on the sphere.
The first two panels show validation mean squared error (MSE) over a hyper-parameter grid at
$n=d=1000$, $\zeta = 0.5$, for $\rho=10$ and $\rho=10^4$. The third and fourth panel show normalized reconstruction error after hyper-parameter selection within the range  $\lambda\geq 0.1$ and
$\clipConstant\leq 0.1\labelStdDev$.
}
\label{fig:combined-synth-viz}
\end{figure}

Consider now the case where $\eta$ is a small constant (e.g., $\eta=0.01$), the
label noise is of constant order $\zeta = \Theta(1)$ and the number
of training samples satisfies $n = \Omega(d)$, often
needed in linear regression to achieve non-trivial test loss \citep{wainwright2019high, mourtada2022}. Then, the smallest $\rho$ allowing successful
reconstruction depends on the
hyper-parameters $\clipConstant, \lambda$ used by output
perturbation.
\cref{fig:combined-synth-hyperparameter-search}
shows that small clipping constants $\clipConstant = O(1)$ and a sufficiently
large regularization $\lambda = \Theta(1)$ empirically yield good
performance and, under this scaling, \eqref{eq:huber-reconstruction-bound} reads
\begin{equation}\label{eq:attackthmcompact}
\frac{\reconstructionError}{\| x_t \|_2^2} \lesssim \frac{d}{\rho}.
\end{equation}
This matches the scaling of the impossibility result of Theorem
\ref{theorem:rho-entropy-reconstruction}, and the sharp transition in data reconstruction at $\rho\asymp d$ is clearly displayed in \cref{fig:combined-synth-reconstruction}. 
We also remark that the findings in \cref{fig:combined-synth-viz}(a) on $\clipConstant$ are in analogy with the evidence that
small gradient clipping constants yield good downstream performance in private gradient methods \citep{li2022large, de2022, bombari2026highdimensionalprivatelinearregression}. %
The proof of Theorem \ref{thm:direct-huber-reconstruction} is
in \cref{subsec:huber-reconstruction-attack}, with a sketch
given below.

\paragraph{Proof sketch.}
The idea is to separate the signal carried by the target point from
the perturbation induced by the privacy noise.
Writing $a=\psi_{\clipConstant}(\targetPoint^\top\optWeights-\targetLabel)$,
the optimality condition \eqref{eq:huber-loo-grad-at-full-opt-weights} gives
\[
-\huberLooGrad(\noisyWeights)=a\targetPoint+w,
\qquad
w:=\huberLooGrad(\optWeights)-\huberLooGrad(\noisyWeights).
\]

First, to lower-bound $|a|$, we show that the target residual at
$\optWeights$ is close to that at the leave-one-out estimator $\restWeights$
defined in \eqref{eq:huber-loo-optimal-weights}.
Conditional on $(\targetPoint,\restOfPairsDS)$, the latter residual is
Gaussian with variance $\labelStdDev^2$.
Gaussian anti-concentration, combined with
\cref{lem:huber-loo-grad-difference-bounded} , then yields
\[
|a|\gtrsim
\min\left\{\clipConstant,
\frac{\labelStdDev\eta}{1+\featureRadiusBound^2/(n\lambda)}\right\}
\]
with probability at least $1-\eta$; see \eqref{eq:huber-signal-threshold}.
Next, the same lemma controls the amplification of the privacy noise
through $\huberLooGrad$.
Combining concentration of the Gram matrix $\restOfDS^\top\restOfDS$
and of the norm of the  Gaussian noise gives, with high probability,
\[
\|w\|_2\lesssim
\dpStdDev\sqrt d\,(n\lambda+n+d),
\]
as established in \eqref{eq:huber-transformed-privacy-noise-bound}.
Finally, \cref{lem:reconstruction-error-snr} gives
$\reconstructionError\leq4\|w\|_2^2/a^2$.
Substituting $\dpStdDev$ from \cref{prop:output-pert-is-private},
using $\featureRadiusBound^2=O(d)$, and taking a union bound
yields the final guarantee. \qed

\subsection{Extension to low-dimensional data}\label{sec:low_dim}

The bounds above show that $\rho \asymp d$ is a sharp threshold for
reconstructing data in dimension $d$ privatized via a $\rho$-zCDP
mechanism:  Theorem \ref{theorem:rho-entropy-reconstruction} implies
that reconstruction is information-theoretically impossible for $\rho
\ll d$, and Theorem
\ref{thm:direct-huber-reconstruction} exhibits a successful attack for
$\rho \gg d$. However, the impossibility result, and
therefore the tightness of this threshold, relies on the entropy
of $x_t$ to scale with the number of dimensions, i.e.,  $h(x_t)
\asymp d$, which may not be the case for data in practical
applications. To model that, we now focus on data lying in a lower
dimensional subspace of
$\mathbb R^d$. %

More precisely, let $P$ be a fixed orthogonal projection of rank $s <
d$, and let
$U\in\reals^{d\times s}$ have orthonormal columns spanning
$\operatorname{Im}(P)$, so that $P=UU^\top$. Define
\[
h_s(\targetPoint):=h(U^\top\targetPoint),
\]
where $h$ is taken with respect to the Lebesgue measure on $\reals^s$.
Note that this definition does not depend on the specific choice of
orthonormal basis, and its purpose is to avoid computing the
differential entropy of $P \targetPoint$ in the original ambient
space $\reals^d$, where its distribution would be singular. 

The result below (proved in \cref{sec:lower-bounds-sparse-priors})
extends the lower bound of Theorem \ref{theorem:rho-entropy-reconstruction}.

\begin{theorem}
[store=low-dim-entropy-thm]%
\label{theorem:low-dim-entropy-bound}
In the same setting as \cref{theorem:rho-entropy-reconstruction},
for any rank-$s$ projection $P$ satisfying the conditions above, any
reconstruction of $\targetPoint$ from a $\rho$-zCDP mechanism
obeys:
\begin{align}
\label{eq:sparse-reconstruction-lower-bound-relative}
\frac{\ex\|\targetPoint-\attackPoint\|_2^2}
{\ex\|P \targetPoint\|_2^2}
&\geq
\frac{1 }{2\pi e}
\exp\left(\frac{2}{s}(h_s(x_s')-\rho)\right),
\end{align}
with %
$x_s' := \frac{\sqrt
s}{\sqrt{\ex \|P \targetPoint\|_2^2}} P\targetPoint$.
\end{theorem}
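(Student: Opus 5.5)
We reduce the statement to the $s$-dimensional coordinates $u := U^\top\targetPoint \in \reals^s$ and then repeat the argument of \cref{theorem:rho-entropy-reconstruction} in dimension $s$. Given any attack output $\attackPoint$, we form the derived estimate $\widehat u := U^\top \attackPoint$. Since $U$ has orthonormal columns and $P = UU^\top$, we have
\[
\|\targetPoint-\attackPoint\|_2^2 \geq \|P(\targetPoint-\attackPoint)\|_2^2 = \|u-\widehat u\|_2^2 ,
\]
and also $\ex\|P\targetPoint\|_2^2 = \ex\|u\|_2^2$. So it suffices to lower bound $\ex\|u-\widehat u\|_2^2/\ex\|u\|_2^2$. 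This step only improves things for the attacker, since it simply discards the orthogonal component of the error.

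\textbf{Mutual information bound.} By \cref{lem:zcdp-mut-info} we have $I(\targetPair;\mech(\restOfPairsDS\cup\{\targetPair\}))\leq\rho$. Because $\targetPoint$ is independent of $\restOfPairsDS$, the data processing argument used for \eqref{eq:sketch1} gives $I(\targetPoint;\attackPoint)\leq\rho$. The map $\attackPoint\mapsto\widehat u$ is a deterministic post-processing, and $u$ is a function of $\targetPoint$. Hence data processing yields
\[
I(u;\widehat u)\leq I(\targetPoint;\attackPoint)\leq\rho .
\]

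\textbf{Rate-distortion step.} We apply \cref{lem:squared-l2-rate-dist} in $\reals^s$ to the pair $(u,\widehat u)$, with $D_s := \ex\|u-\widehat u\|_2^2$:
\[
h(u) - \frac{s}{2}\log\frac{2\pi e D_s}{s} \leq I(u;\widehat u)\leq \rho .
\]
If that lemma is stated only for dimension $d$, we re-derive it here. Write $I(u;\widehat u) = h(u) - h(u-\widehat u\mid\widehat u) \geq h(u) - h(u-\widehat u)$. Then bound $h(u-\widehat u)$ by the entropy of an isotropic Gaussian in $\reals^s$ with the same second moment, namely $\frac{s}{2}\log(2\pi e D_s/s)$.

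\textbf{Normalization.} Let $c := \sqrt{s/\ex\|u\|_2^2}$, and consider the rescaled coordinates $U^\top x_s' = c\,u$. Their entropy is
\[
h_s(x_s') = h(cu) = h(u) + s\log c ,
\]
so that $h(u) = h_s(x_s') - \frac{s}{2}\log\frac{s}{\ex\|u\|_2^2}$. Substituting into the rate-distortion inequality gives
\[
\frac{s}{2}\log\frac{2\pi e D_s}{\ex\|u\|_2^2} \geq h_s(x_s')-\rho .
\]
Rearranging yields $D_s/\ex\|u\|_2^2 \geq \frac{1}{2\pi e}\exp\bigl(\frac{2}{s}(h_s(x_s')-\rho)\bigr)$. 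Combined with the first paragraph, this is exactly \eqref{eq:sparse-reconstruction-lower-bound-relative}.

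\textbf{Main point of care.} The only delicate step is justifying the differential entropy manipulations, since $P\targetPoint$ is singular in $\reals^d$. Working throughout with $u\in\reals^s$ avoids this issue. We also need $h(u)$ to be finite, or else the statement is trivial. Two further small checks are required: the conditional-entropy inequality must hold even though $\widehat u$ may be discrete or singular, and $h_s$ must not depend on the choice of $U$, which holds because orthogonal maps preserve entropy.
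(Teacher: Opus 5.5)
Your proposal is correct and follows essentially the same route as the paper's proof. Both reduce to the coordinates $U^\top x_t\in\reals^s$ and bound $I(U^\top x_t;U^\top\widehat{x})\le I(x_t;\widehat{x})\le\rho$ via data processing and \cref{lem:zcdp-mut-info}, then apply \cref{lem:squared-l2-rate-dist} with $k=s$, discard the error component orthogonal to $\operatorname{Im}(P)$, and rescale via $h(cu)=h(u)+s\log c$. Your re-derivation of the rate-distortion bound is unnecessary, since \cref{lem:squared-l2-rate-dist} is already stated for an arbitrary dimension $k$.
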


We note that the lower bound in
\eqref{eq:sparse-reconstruction-lower-bound-relative} is analogous to
that in \eqref{eq:lower-bound-scale-invariant} upon replacing \emph{(i)} $\ex \n
\targetPoint\n_2^2$ with $\ex\|P \targetPoint\|_2^2$ in the
denominator of the LHS, and \emph{(ii)} $h(x_t')$ with $h_s(x_s')$ on
the RHS. Thus, if the point to be reconstructed $x_t$ lies close to
the span of $\operatorname{Im}(P)$ (i.e., $\ex \n
\targetPoint\n_2^2\approx \ex\|P \targetPoint\|_2^2$), then any
attack fails whenever $\rho$ is small compared to the differential
entropy $h_s(x_s')$ computed on that span. Furthermore, as this span
has dimension $s$, we would typically have $h_s(x_s')=\Theta(s)$,
implying the impossibility of data reconstruction whenever $\rho\ll s$.

To complement this lower bound, the result below (also proved in
\cref{sec:lower-bounds-sparse-priors}) characterizes the performance of a variant of the
attack in \cref{thm:direct-huber-reconstruction} that seeks to
reconstruct an $s$-dimensional projection of the target. %

\begin{theorem}
[store=thm:low-dim-huber-reconstruction ]
\label{thm:low-dim-huber-reconstruction}
Consider the same setting as
\cref{thm:direct-huber-reconstruction}, but with
$\featureRadiusBound=\Theta(\sqrt s)$ and
$\|\targetPoint\|_2=\Theta(\sqrt s)$ replacing the corresponding
$\Theta(\sqrt d)$ assumptions.
Let $P \in \mathbb R^{d\times d}$ be a rank-$s$ projector,
known to the attacker, such that
$x_i \in \mathrm{Im}(P)$ for all $i \in [n]$.
Let %
\begin{equation}\label{eq:projattack}
\hat x_P:=\featureRadiusBound
\frac{P\huberLooGrad(\noisyWeights)}
{\|P\huberLooGrad(\noisyWeights)\|_2}.
\end{equation}
Then, if $n = \Omega(s), \lambda = \Omega(1), \clipConstant =
O(\labelStdDev)$, we have
\begin{equation}
\label{eq:low-dim-huber-reconstruction-bound}
\frac{\min_{\tau\in[-1,1]}
\|\targetPoint-\tau\hat x_P\|_2^2}{\|\targetPoint\|_2^2}
\lesssim \frac{s}{\rho},
\end{equation}
with probability at least $0.99$.

\end{theorem}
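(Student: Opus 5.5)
We follow the proof of \cref{thm:direct-huber-reconstruction} and add one new ingredient: the projection $P$ removes every component of the privacy noise outside $\operatorname{Im}(P)$. This is what allows the ambient dimension $d$ to be replaced by $s$.

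\textbf{Decomposition.} Set $a=\psi_{\clipConstant}(\targetPoint^\top\optWeights-\targetLabel)$ and $w=\huberLooGrad(\optWeights)-\huberLooGrad(\noisyWeights)$, exactly as in the sketch of \cref{thm:direct-huber-reconstruction}. The identity $-\huberLooGrad(\noisyWeights)=a\targetPoint+w$ holds. Because $\targetPoint\in\operatorname{Im}(P)$, applying $P$ gives
\[
-P\huberLooGrad(\noisyWeights)=a\targetPoint+Pw .
\]
By \cref{lem:reconstruction-error-snr}, applied with $P\huberLooGrad$ in place of $\huberLooGrad$ and with normalization $\featureRadiusBound=\Theta(\sqrt s)$, the error satisfies $\min_\tau\|\targetPoint-\tau\hat x_P\|_2^2\le 4\|Pw\|_2^2/a^2$. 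It therefore suffices to prove two bounds: $|a|\gtrsim \min\{\clipConstant,\labelStdDev\}$ with probability $0.995$, and $\|Pw\|_2^2/\|\targetPoint\|_2^2\lesssim \clipConstant^2 s/\rho$ with probability $0.995$.

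\textbf{Signal term.} The argument is unchanged from the full-dimensional case. The target residual at $\optWeights$ is close to the residual at the leave-one-out estimator $\restWeights$, by \cref{lem:huber-loo-grad-difference-bounded}. The residual at $\restWeights$ is conditionally $\N(\cdot,\labelStdDev^2)$, so Gaussian anti-concentration with $\eta=0.005$ applies. This gives $|a|\gtrsim\min\{\clipConstant,\labelStdDev\eta/(1+\featureRadiusBound^2/(n\lambda))\}$. With $\featureRadiusBound^2=\Theta(s)$, $n=\Omega(s)$ and $\lambda=\Omega(1)$, the denominator is $O(1)$. Since $\clipConstant=O(\labelStdDev)$, we conclude $|a|\gtrsim\clipConstant$.

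\textbf{Noise term, the main obstacle.} Write $w=\sum_{i\ne t}x_i\big(\psi(x_i^\top\optWeights-y_i)-\psi(x_i^\top\noisyWeights-y_i)\big)-n\lambda\dpNoise$. Every $x_i$ lies in $\operatorname{Im}(P)$, so $x_i^\top\noisyWeights-x_i^\top\optWeights=x_i^\top P\dpNoise$, and the sum belongs to $\operatorname{Im}(P)$. Using $1$-Lipschitzness of $\psi$ coordinatewise,
\[
\Big\|\sum_{i\ne t}x_i(\cdots)\Big\|_2\le\opnorm{\restOfDS}\,\|\restOfDS P\dpNoise\|_2\le \opnorm{\restOfDS^\top\restOfDS}\,\|P\dpNoise\|_2,
\]
and the regularization part projects to $n\lambda P\dpNoise$. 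Hence $\|Pw\|_2\le(\opnorm{\restOfDS^\top\restOfDS}+n\lambda)\|P\dpNoise\|_2$.

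Here lies the crucial point. The vector $P\dpNoise$ is Gaussian on an $s$-dimensional subspace, so $\|P\dpNoise\|_2\lesssim\dpStdDev\sqrt s$ with high probability. This replaces the $\sqrt d$ of the full-dimensional proof. The data are $O(1)$-subGaussian and supported in an $s$-dimensional subspace, so $\opnorm{\restOfDS^\top\restOfDS}=O(n+s)$ with high probability; this is the main technical check, and the plan is to apply the standard covariance concentration in the coordinates $U^\top x_i\in\reals^s$. The result is $\|Pw\|_2\lesssim\dpStdDev\sqrt s\,(n\lambda+n+s)$.

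Substituting $\dpStdDev^2=2\featureRadiusBound^2\clipConstant^2/(\lambda^2\rho n^2)$ from \cref{prop:output-pert-is-private} gives
\[
\|Pw\|_2^2\lesssim \frac{s\clipConstant^2}{\rho}\,s\,\Big(1+\tfrac1\lambda\Big)^2\Big(1+\tfrac sn\Big)^2 .
\]
Dividing by $a^2\gtrsim\clipConstant^2$ and by $\|\targetPoint\|_2^2=\Theta(s)$, and using $\lambda=\Omega(1)$ and $n=\Omega(s)$, yields the ratio $\lesssim s/\rho$. A union bound over the two events, together with the $e^{-cn}$ and $e^{-cs}$ concentration events (made small by taking constants large), gives total probability at least $0.99$.
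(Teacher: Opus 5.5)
Your proposal is correct and follows the paper's proof essentially step for step: you project the signal-plus-noise identity onto $\operatorname{Im}(P)$, reuse the leave-one-out anti-concentration bound for $|a|$, and observe that the privacy noise enters only through $P\dpNoise$ (the paper phrases this as $P\Mdp\dpNoise=P\Mdp P\dpNoise$, while you get it directly from the Lipschitz property of $\psi_{\clipConstant}$). You then combine $\opnorm{\restOfDS^\top\restOfDS}=O(n+s)$, obtained in the coordinates $U^\top x_i$, with $\|P\dpNoise\|_2\lesssim\dpStdDev\sqrt s$, exactly as the paper does; your point that the concentration failure probabilities can be made small by enlarging constants, rather than relying on $e^{-cn}$ and $e^{-cs}$ being small, is slightly more careful than the paper's accounting.
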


While the result of Theorem \ref{theorem:low-dim-entropy-bound} is scale-invariant, Theorem \ref{thm:low-dim-huber-reconstruction} considers $R$ and $\|x_t\|_2$ of order $\sqrt{s}$.\footnote{If one wishes to keep those quantities of order $\sqrt{d}$, we note that the same result would hold upon taking $\lambda$ of order $d/s$ and assuming that, for any feature $x\in \mathbb R^d$, $\sqrt{s/d}x$ is $O(1)$-subGaussian.} Modulo this scale difference, the upper bound in
\eqref{eq:low-dim-huber-reconstruction-bound} is analogous to that in
\eqref{eq:attackthmcompact} upon replacing \emph{(i)} the estimator
$\attackPoint$ with its projected version $\hat x_P$, and
\emph{(ii)} the dimension $d$ of the ambient space with the dimension
$s$ of the subspace containing the data. In words, Theorem
\ref{thm:low-dim-huber-reconstruction} shows that an attack having
knowledge of the subspace of dimension $s$ in which the data lies, is
successful whenever $\rho\gg s$ and, combined with Theorem
\ref{theorem:low-dim-entropy-bound}, this establishes $\rho\asymp s$
as a sharp threshold for data reconstruction.

\section{Experiments}
\label{sec:experiments}

We study the attacks of Sections \ref{sec:algo}-\ref{sec:low_dim} for
data sampled uniformly on the $d$-dimensional sphere, synthetic data concentrated near an $s$-dimensional subspace, and binary regression on data taken from ImageNet (elephants vs.\ pandas) and CIFAR-10 (frog vs.\ trucks).
Details on datasets construction, learning algorithm, and attacks are given in \cref{app:experimental-details}. %
We report the squared reconstruction error (see e.g. the LHS of \eqref{eq:low-dim-huber-reconstruction-bound}), %
with the sign ambiguity resolved.

\paragraph{Synthetic $d$-dimensional data.} %
In \cref{fig:combined-synth-hyperparameter-search}, we investigate how different choices of $\clipConstant$ and $\lambda$ affect the performance of the output perturbation algorithm in \eqref{eq:output-perturbation}. The heatmaps show that, for both values of $\rho$, the best performance is achieved for \emph{(i)} sufficiently small values of $\clipConstant$, and \emph{(ii)} an appropriate ratio $\clipConstant / \lambda$. In fact, the lowest validation risk occurs on a top-left to bottom-right diagonal of the heatmap. Then, for any output perturbation algorithm, we take the values of $\lambda \ge 0.1$ and $\clipConstant \leq 0.1 \zeta$ that minimize the validation risk. %
In the left panel of \cref{fig:combined-synth-reconstruction}, we plot the reconstruction error as a function of $\rho$ and $d$, and identify a clear diagonal boundary $\rho / \dimInputs \asymp 1$ between successful and unsuccessful attacks, validating our result in \eqref{eq:attackthmcompact}. In the right panel of \cref{fig:combined-synth-reconstruction}, we plot the same reconstruction error as a function of $\rho / d$, and show that curves corresponding to different values of $d$ (in different colors) collapse onto each other under this scaling.

\begin{figure}[!t]
\centering
\begin{minipage}[t]{0.48\linewidth}
\vspace{0pt}
\centering
\captionsetup[subfigure]{justification=centering,singlelinecheck=false}
\includegraphics[width=\linewidth]{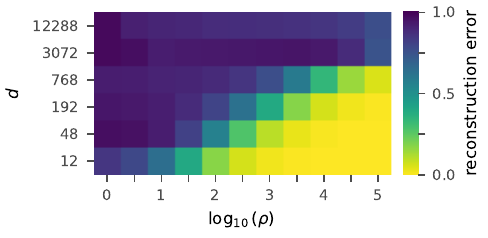}\par
  \vspace{-1em}
\subcaptionbox{\label{fig:imagenet-heatmap}ImageNet reconstruction
heatmap}[\linewidth]{\rule{0pt}{0pt}}\hfill

\includegraphics[width=\linewidth]{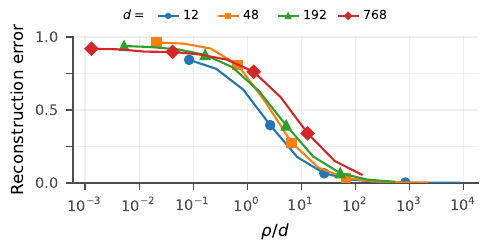}\par
  \vspace{-1em}
\subcaptionbox{\label{fig:imagenet-rho-d}ImageNet,
$\rho/d$}[\linewidth]{\vspace{-1em}\rule{0pt}{0pt}}\hfill

\caption{Reconstruction of ImageNet data (elephants vs.\ pandas).
Hyperparameter search for mean validation MSE within $\lambda\geq0.1$ and
$\clipConstant\leq0.01$. The panels show squared relative error against
$(\rho,\dimInputs)$ and against $\rho/\dimInputs$. 
\label{fig:imagenet}
}
\end{minipage}\hfill
\begin{minipage}[t]{0.48\linewidth}
\vspace{0pt}
\centering
\includegraphics[width=\linewidth]{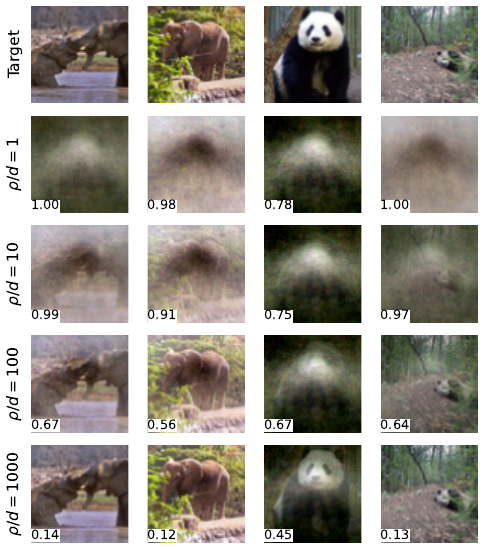}

\caption{Reconstructions of four target images from ImageNet (elephants vs.\ pandas) as the privacy budget grows,
with $\lambda=1$ and $\clipConstant=0.125$. The number inside
each reconstruction is its reconstruction error. Here,
$\dimInputs=12{,}288$ ($64\times64$ RGB images). Additional plots appear in
\cref{fig:visual-reconstruction-seeds,fig:cifar10-visual-reconstruction-seeds}.
\label{fig:visual-reconstruction} 
}
\end{minipage}
\end{figure}

\paragraph{Natural images.}
In \cref{fig:imagenet},  we consider ImageNet data, and perform the same experiments as those described in \cref{fig:combined-synth-viz} for synthetic data. Here, we tune the value of $d$ by down-scaling images to a lower resolution, and we optimize the output perturbation algorithm across the hyper-parameters ranges $\lambda \geq 0.1$ and $\clipConstant \leq 0.01$. 
In \cref{fig:imagenet-heatmap}, we plot the reconstruction error as a function of $\rho$ and $d$, and identify the same clear diagonal boundary as in \cref{fig:combined-synth-viz} for the attack success across different image resolutions. In \cref{fig:imagenet-rho-d}, we then plot the same reconstruction error as a function $\rho/\dimInputs$, noting that curves corresponding to different values of $d$ collapse onto each other. This provides evidence that our proposed threshold $\rho\asymp d$ for data reconstruction persists in natural images as well.  Furthermore, in 
\cref{fig:visual-reconstruction} we give a visual presentation of some target images (first row) and the corresponding reconstructions for increasing values of $\rho / \dimInputs$.
Additional visual reconstructions appear in
\cref{fig:visual-reconstruction-seeds,fig:cifar10-visual-reconstruction-seeds} (Appendix \ref{app:visual-reconstruction-seeds}).

\paragraph{Low-dimensional data.} 

In \cref{fig:pca-reconstruction}, we numerically investigate the conclusions of \cref{thm:low-dim-huber-reconstruction,theorem:low-dim-entropy-bound}. 
More precisely, in \Cref{fig:pca-reconstruction-synthetic}, we fix the ambient dimension at $\dimInputs=1000$, and generate synthetic data concentrated near an $s$-dimensional subspace. The construction is detailed in Appendix \ref{app:datasets}. We test the reconstruction attack in \eqref{eq:projattack}, estimating the rank-$s$ PCA projector $P$ from $\restOfDS$, and report its performance as a function of $\rho / s$ (left panel) and $\rho$ (right panel). The reconstruction error curves with respect to $\rho / s$ overlap for different values of $s$, showing a transition at the level of $\rho \asymp s$.
In contrast, the curves are clearly separated when plotted against
$\rho$. This validates our bound in \eqref{eq:low-dim-huber-reconstruction-bound}, and it shows that the same privacy budget $\rho$ can result in different guarantees in terms of data reconstruction, depending on the effective dimension of the data.

Next, in \cref{fig:pca-reconstruction-cifar10}, we consider a similar class of experiments, training our DP linear regression model on CIFAR-10 data. We perform the reconstruction attack in \eqref{eq:projattack}, defining $P$ as the projector over the $s$ principal components of the data distribution, estimated from $\restOfDS$. %
We again plot the reconstruction error both in terms of $\rho / s$ (left panel) and $\rho$ (right panel). As for the synthetic data, the curves are separated when plotted against $\rho$, and they collapse onto each other when plotted against $\rho/s$. However, differently from the synthetic data, when $s$ is too small (e.g., $s=10$), the error exhibits a plateau even at large $\rho/s$,
capturing the fact that low-rank subspaces do not capture all of the signal of the target.
Similar results for ImageNet are reported in
\cref{fig:pca-reconstruction-imagenet} (Appendix \ref{app:pca-reconstruction-rest}).

\begin{figure}[t]
\centering
\includegraphics[width=0.48\linewidth]{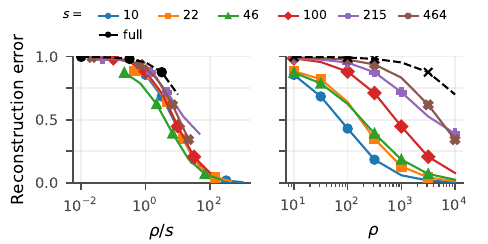}\hfill
\includegraphics[width=0.48\linewidth]{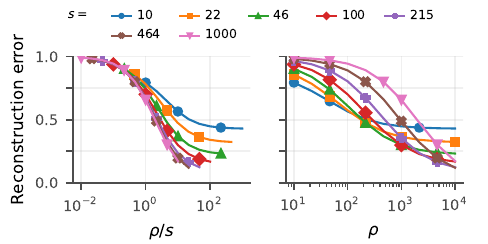}\par
\vspace{-1em}
\captionsetup[subfigure]{justification=centering,singlelinecheck=false}
\noindent\makebox[\linewidth][l]{%
\subcaptionbox{\label{fig:pca-reconstruction-synthetic}Synthetic
rank-$s$ data}%
[0.48\linewidth]{\rule{0pt}{0pt}}\hfill
\subcaptionbox{\label{fig:pca-reconstruction-cifar10} CIFAR-10 (frog vs.\ truck)}%
[0.48\linewidth]{\rule{0pt}{0pt}}%
}
\caption{Reconstruction error for projected attacks on (a) synthetic
rank-$s$ data with $\dimInputs=1000$, and (b) CIFAR-10
(frog vs.\ truck) data with $\dimInputs=3{,}072$ ($32\times32$ RGB images). Each pair of plots uses $\rho/s$ (left) and $\rho$ (right) on the x-axis, and different colors for different values of the rank  $s$. %
\label{fig:pca-reconstruction}}
\end{figure}

\section{Conclusions}

In this work, we study how differential privacy quantitatively determines the feasibility of successful reconstruction attacks.
First, in \cref{theorem:rho-entropy-reconstruction}, we prove that any $\rho$-zCDP mechanism prevents data reconstruction as long as the privacy budget is sufficiently smaller than the entropy of the target, i.e., $\rho \ll h(x_t)$. When $h(x_t) = \Theta(d)$ (which holds, e.g., for distributions with independent entries), this provides the sufficient condition $\rho \ll d$ to protect from reconstruction attacks.
Then, in Theorem \ref{thm:direct-huber-reconstruction}, we show that this condition is also necessary: there exists an attack on a linear model trained with output perturbation that achieves arbitrary accuracy as $\rho \gg d$.
Taken together, these results identify $\rho \asymp \dimInputs$ as a sharp transition in data
reconstruction. %
We next show that this threshold shifts according to the \emph{effective} dimension of the data: when the data lies on an $s$-dimensional subspace known to the attacker, the transition moves to $\rho \asymp s$ (\cref{thm:low-dim-huber-reconstruction,theorem:low-dim-entropy-bound}). This highlights the role of the data prior in assessing the effective guarantees given by a privacy budget.

A first natural research direction for future work is to extend \cref{thm:direct-huber-reconstruction} to other algorithms (e.g., DP gradient descent or objective perturbation) and beyond the linear model assumption, as well as expand our results on low-dimensional data to non-linear subspaces.
A second exciting avenue is %
to consider less powerful attackers, without knowledge of all remaining training data.
Information-theoretic lower bounds in
that setting would complement the recent work by \citet{swanberg2026unifiedframeworkadversaryawaredifferential} on adversary-aware privacy
guarantees.
We suspect that matching those bounds with attacks would require a different learning
model: when multiple training examples are unknown, the optimality conditions of a linear model generally leave their individual values
under-determined \citep{runkel2025training}. 
Thus, the success of the attack will likely depend on the capacity of the learning model, in agreement with the results of \cite{iurada2025law} in the context of non-private learning.

\newpage

\section*{Acknowledgements}
 
This research was funded in whole or in part by the Austrian Science Fund (FWF) 10.55776/COE12. For the purpose of open access, the authors have applied a CC BY public copyright license to any Author Accepted Manuscript version arising from this submission.

Simone Bombari was supported by a Google PhD fellowship. The authors would like to thank Edwige Cyffers for helpful discussions.

This research was supported by the Scientific Service Units (SSU) of the Institute of Science and
Technology Austria through resources provided by Scientific Computing (SciComp).

\bibliography{references.bib}

\bibliographystyle{plainnat}

\appendix
\crefalias{section}{appendix}
\section{Additional notation}

Given a symmetric matrix $A$, we denote with $\tr(A)$ its trace, and by $\svmin{A}$ ($\svmax{A}$) its
smallest (largest) singular value.
Given a symmetric matrix $A$, we
denote by $\evmin{A}$ ($\evmax{A}$) its smallest (largest) eigenvalue.
For square, symmetric matrices $A,B$, we write $ A\preceq B$ if $B-A$ is positive semi-definite (p.s.d.), where $A$ is p.s.d.\ if $x^\top A x \geq 0$ for all $x\in \reals^n$.
We use the notation $[b]_+=\max\{b,0\}$ for $b \in \reals$.

\section{Sensitivity and privacy of output perturbation}
\label{subsec:output-pert-sens}

\begin{lemma}[Sensitivity of \eqref{eq:clippedminimization}]
  \label{lem:output-perturbation-sensitivity}
Fix $\lambda,\featureRadiusBound,\clipConstant>0$. Let
$\fullPairsDS,\fullPairsDS'$ be adjacent datasets of size $\numPoints$,
with every feature vector in either dataset having norm at most
$\featureRadiusBound$. Then, the minimizer in
\eqref{eq:clippedminimization} satisfies
  \begin{equation}
    \left\|\optWeights(\fullPairsDS)-\optWeights(\fullPairsDS')\right\|_2
    \leq
    \frac{2 R \clipConstant}
    {\lambda\numPoints}.
  \end{equation}
\end{lemma}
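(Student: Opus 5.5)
The plan is the standard strong-convexity argument for regularized ERM. Write
\[
L_Z(\theta)=\frac1n\sum_{i=1}^n \huberLoss(x_i^\top\theta-y_i)+\frac\lambda2\|\theta\|_2^2 .
\]
Let $\theta_1=\optWeights(\fullPairsDS)$ and $\theta_2=\optWeights(\fullPairsDS')$. Without loss of generality, the two datasets differ only in the pair with index $t$, which is $(x_t,y_t)$ in $\fullPairsDS$ and $(x_t',y_t')$ in $\fullPairsDS'$.

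Since $\huberLoss$ is convex and differentiable with derivative $\psi_{\clipConstant}$, each $L_Z$ is $\lambda$-strongly convex and differentiable. Its gradient is
\[
\nabla L_Z(\theta)=\frac1n\sum_i x_i\psi_{\clipConstant}(x_i^\top\theta-y_i)+\lambda\theta .
\]

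\textbf{Step 1: lower bound via strong monotonicity.} Strong convexity of $L_Z$ gives
\[
\langle \nabla L_Z(\theta_2)-\nabla L_Z(\theta_1),\theta_2-\theta_1\rangle\ge\lambda\|\theta_2-\theta_1\|_2^2 .
\]
We have $\nabla L_Z(\theta_1)=0$ by optimality, so the left-hand side equals $\langle \nabla L_Z(\theta_2),\theta_2-\theta_1\rangle$.

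\textbf{Step 2: the gradient difference is a single term.} We also have $\nabla L_{Z'}(\theta_2)=0$. The two gradients at $\theta_2$ share every summand except index $t$, so
\[
\nabla L_Z(\theta_2)=\nabla L_Z(\theta_2)-\nabla L_{Z'}(\theta_2)=\frac1n\Bigl(x_t\psi_{\clipConstant}(x_t^\top\theta_2-y_t)-x_t'\psi_{\clipConstant}(x_t'^\top\theta_2-y_t')\Bigr).
\]
Using $|\psi_{\clipConstant}|\le\clipConstant$ and $\|x_t\|_2,\|x_t'\|_2\le\featureRadiusBound$, the triangle inequality gives
\[
\|\nabla L_Z(\theta_2)\|_2\le \frac{2\featureRadiusBound\clipConstant}{n}.
\]

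\textbf{Step 3: conclude.} Cauchy--Schwarz on the expression from Step 1 yields
\[
\lambda\|\theta_2-\theta_1\|_2^2\le \frac{2\featureRadiusBound\clipConstant}{n}\|\theta_2-\theta_1\|_2 .
\]
Dividing by $\|\theta_2-\theta_1\|_2$ (the claim is trivial when this is $0$) gives the stated bound $\frac{2\featureRadiusBound\clipConstant}{\lambda n}$.

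There is no real obstacle. The only points to check are that $\huberLoss$ is convex and $C^1$ with $|\psi_{\clipConstant}|\le\clipConstant$, which is immediate from its piecewise definition, and that strong convexity guarantees the minimizers exist and are unique. The bound is independent of the labels, as required.
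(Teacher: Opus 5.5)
Your proof is correct and is essentially the paper's argument. Both combine the first-order optimality conditions with ($\lambda$-strong) monotonicity of the gradient, bound each of the two non-shared per-example gradient terms by $R\,C_{\mathrm{clip}}$, and finish with Cauchy--Schwarz. The only cosmetic difference is where the differing terms are evaluated. You evaluate the discrepancy $\nabla L_Z-\nabla L_{Z'}$ at the single point $\theta_2$, using strong monotonicity of the whole objective $L_Z$. The paper instead subtracts the two optimality conditions, bounds $g_1(\theta)$ and $g_1'(\theta')$ at different points, and uses monotonicity only of the $n-1$ shared terms.
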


\begin{proof}
  Without loss of generality, let $\fullPairsDS$ and $\fullPairsDS'$
  differ in their first
  example, and write
  \[
    \theta:=\optWeights(\fullPairsDS),
    \qquad
    \theta':=\optWeights(\fullPairsDS'),
    \qquad
    r:=\theta-\theta'.
  \]
Define
\[
  g_i(\vartheta):=
  x_i\ell_{\clipConstant}'(x_i^\top\vartheta-y_i),\qquad  g'_1(\vartheta):=
  x'_1\ell_{\clipConstant}'((x'_1)^\top\vartheta-y'_1).
\]
The first-order optimality conditions for the two strongly convex
  objectives are
  \begin{align*}
    0
    &=\frac{1}{\numPoints}
    \left(g_1(\theta)+\sum_{i=2}^{\numPoints}g_i(\theta)\right)
    +\lambda\theta,\qquad
    0
    =\frac{1}{\numPoints}
    \left(g'_1(\theta')+\sum_{i=2}^{\numPoints}g_i(\theta')\right)
    +\lambda\theta'.
  \end{align*}
  Subtracting these equations and taking the inner product with $r$
  gives
  \begin{align}\label{eq:sensitivitypert1}
    \lambda\|r\|_2^2
    ={}&-\frac{1}{\numPoints}
    r^\top \left(\sum_{i=2}^{\numPoints}
      \bigl(g_i(\theta)-g_i(\theta')\bigr)
    \right)-\frac{1}{\numPoints}
    r^\top \left(g_1(\theta)-g'_1(\theta')\right).
  \end{align}
  The first term on the RHS is non-positive. In fact, $\sum_{i=2}^{\numPoints}
  \ell_{\clipConstant}(x_i^\top\vartheta-y_i)$
  is convex in $\vartheta$, so its gradient is monotone, i.e.,
  \[
    r^\top \left(\sum_{i=2}^{\numPoints}
      \bigl(g_i(\theta)-g_i(\theta')\bigr)
    \right)
    \geq 0.
  \]

  Since $|\ell_{\clipConstant}'(\cdot)|\leq\clipConstant$ and
  $\|x_i\|_2\leq\featureRadiusBound$, every per-example gradient
  satisfies
  \[
    \|g_i(\vartheta)\|_2
    \leq R \clipConstant.
  \]
  Then, plugging this result into \eqref{eq:sensitivitypert1},
  Cauchy-Schwarz and the triangle inequality yield
  \begin{align*}
    \lambda\|r\|_2^2
    &\leq\frac{\|r\|_2}{\numPoints}
    \left(\|g_1(\theta)\|_2+\|g'_1(\theta')\|_2\right)\\
    &\leq
    \frac{2 R \clipConstant}{\numPoints}\|r\|_2.
  \end{align*}
  If $r=0$, the claim is immediate. Otherwise, dividing by
  $\lambda\|r\|_2$ proves
  \[
    \| r\|_2 = \|\optWeights(\fullPairsDS)-\optWeights(\fullPairsDS')\|_2
    \leq
    \frac{2 R \clipConstant}
    {\lambda\numPoints},
  \]
  which gives the desired result.
\end{proof}

\getkeytheorem{prop:output-pert-is-private}

\begin{proof}
  By \cref{lem:output-perturbation-sensitivity}, for adjacent
  $\fullPairsDS,\fullPairsDS'$,
  \[
    \left\|\optWeights(\fullPairsDS)-\optWeights(\fullPairsDS')\right\|_2
    \leq
    \Delta_2
    :=\frac{2 R \clipConstant}
    {\lambda\numPoints}.
  \]
  For every $\alpha>1$, the Rényi divergence between Gaussians with
  means $\mu,\mu'$ and common covariance $\dpStdDev^2I_{\dimInputs}$
  is given by (see \citep[Proposition 7]{mironovRenyiDifferentialPrivacy2017})
  \[
    D_\alpha\!\left(
      \N(\mu,\dpStdDev^2I_{\dimInputs})
      \middle\|
      \N(\mu',\dpStdDev^2I_{\dimInputs})
    \right)
    =\frac{\alpha\|\mu-\mu'\|_2^2}{2\dpStdDev^2}.
  \]
  Taking $\mu=\optWeights(\fullPairsDS)$ and
  $\mu'=\optWeights(\fullPairsDS')$, the stated
  lower bound on $\dpStdDev^2$ gives
  \[
    D_\alpha\bigl(\noisyWeights(\fullPairsDS)\|\noisyWeights(\fullPairsDS')\bigr)
    \leq
    \frac{\alpha\Delta_2^2}{2\dpStdDev^2}
    \leq\rho\alpha.
  \]
  The claim follows from \cref{def:zcdp}.
\end{proof}

Note, of course, that larger variances than the value stated also satisfy $\rho$-zCDP given the same hyperparameters, but lead to greater-than-necessary utility loss; we therefore assume the learner chooses the smallest variance to satisfy their desired privacy bound.

\newcommand{\rate}{\mathsf{R}}
\newcommand{\sourceRV}{S}

\section{Proof of the information-theoretic lower bound}
\label{sec:lower-bounds-entropy-mi}

We start with a lemma using arguments from
\cite{bunConcentratedDifferentialPrivacy2016}. %

\begin{lemma}
  [Mutual information bound from $\rho$-zCDP]
  \label{lem:zcdp-mut-info}

  Suppose $\mech$ is a $\rho$-zCDP mechanism, applied to a dataset
  $\fullPairsDS = \restOfPairsDS \cup
  \{\targetPair\}$, where $\targetPair$ is
  a random variable and $\restOfPairsDS$ is fixed. Then, we have
  \begin{align}
    I(\targetPair; \mech(\restOfPairsDS \cup \{\targetPair\})) \leq \rho,
  \end{align}
  where $I(\cdot ; \cdot)$ denotes the mutual information in the
  probability space of $\targetPair$ and of the private mechanism.
\end{lemma}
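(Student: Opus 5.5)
We bound the mutual information through a KL divergence to an arbitrary reference output distribution, and control that divergence by letting the R\'enyi order tend to $1$ in the zCDP condition.

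\textbf{Step 1: MI as an infimum of KL divergences.} Write $M_z:=\mech(\restOfPairsDS\cup\{z\})$ for the output distribution when the target equals $z$. Let $\bar M:=\ex_{z\sim P_{\targetPair}}[M_z]$ be the marginal law of the output. Then
\[
I(\targetPair;\mech(\restOfPairsDS\cup\{\targetPair\}))=\ex_{z}\,\kl(M_z\,\|\,\bar M).
\]
For any fixed distribution $Q$ on $\paramSpace$, we also have the identity
\[
\ex_z\kl(M_z\|Q)=\ex_z \kl(M_z\|\bar M)+\kl(\bar M\|Q)\geq I,
\]
so it suffices to exhibit a suitable reference $Q$.

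\textbf{Step 2: choice of reference and convexity.} Take $Q:=\bar M$ itself, and bound $\kl(M_z\|\bar M)$ using joint convexity of KL in its second argument:
\[
\kl(M_z\|\bar M)\leq \ex_{z'}\kl(M_z\|M_{z'}).
\]
The datasets $\restOfPairsDS\cup\{z\}$ and $\restOfPairsDS\cup\{z'\}$ differ in one pair, so they are adjacent. Alternatively, one could take $Q=M_{z_0}$ for a fixed $z_0$ and apply Step 1 directly; either route works.

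\textbf{Step 3: KL from zCDP.} For adjacent datasets, \cref{def:zcdp} gives $D_\alpha(M_z\|M_{z'})\leq\rho\alpha$ for all $\alpha>1$. The R\'enyi divergence is nondecreasing in $\alpha$ and satisfies $D_1=\kl=\lim_{\alpha\downarrow1}D_\alpha$. Hence
\[
\kl(M_z\|M_{z'})\leq \inf_{\alpha>1}D_\alpha(M_z\|M_{z'})\leq \lim_{\alpha\downarrow1}\rho\alpha=\rho.
\]
Note that $\kl\leq D_\alpha$ for every $\alpha>1$ by monotonicity, so we only need the bound on $D_\alpha$, not the continuity of the limit. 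Averaging over $z$ and $z'$ and combining with Step 2 gives $I\leq\rho$.

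\textbf{Main obstacle.} The only delicate point is measure-theoretic. We need measurability of $z\mapsto M_z$ so that the mixture $\bar M$ and the chain-rule identity make sense. We also need the convexity step to hold for a general (possibly continuous) mixture. This follows from the variational (Donsker--Varadhan) representation of KL combined with Fubini, or from Jensen applied to the convex map $Q\mapsto\kl(P\|Q)$. I expect to handle this by assuming $\mech$ is a Markov kernel, which is standard.
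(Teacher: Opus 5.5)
Your proposal is correct and follows the paper's proof: write the mutual information as the average of $\kl(M_z\|\bar M)$, use convexity of KL in its second argument to pass to $\ex_{z'}\kl(M_z\|M_{z'})$, and bound each pairwise term by $\rho$ via the zCDP condition. The paper gets $\kl\le\rho$ from continuity of $D_\alpha$ as $\alpha\downarrow1$, whereas you use monotonicity of $D_\alpha$ in $\alpha$, which is slightly cleaner because it needs no finiteness hypothesis; your Step 1 identity is needed only for the convexity-free alternative with $Q=M_{z_0}$, which is also valid.
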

\begin{proof}%
  As $\mech$ is $\rho$-zCDP, due to Definition
  \ref{def:zcdp}, we have that
  \[
    D_\alpha(\mech(\fullPairsDS) \| \mech(\fullPairsDS')) \leq \rho \cdot \alpha,
  \]
  for all $\alpha > 1$ and all neighboring datasets
  $\fullPairsDS,\fullPairsDS'$ where $\fullPairsDS = \restOfPairsDS \cup \{z\}$, $\fullPairsDS'
  = \restOfPairsDS\cup \{z'\}$, for arbitrary $z,z' \in \inputSpace$.
  By continuity of the Rényi divergence as $\alpha \to 1$ (on
  $\alpha \in [0,\infty]$ when $D_\alpha(P \| Q) < \infty$)
  \cite[Theorem 7]{ervenRenyiDivergenceKullbackLeibler2014}, the
  above condition gives
  \begin{align}\label{eq:privacyKL}
    \kl(\mech(\fullPairsDS) \| \mech(\fullPairsDS')) = D_1(\mech(\fullPairsDS) \|
    \mech(\fullPairsDS')) \leq \rho.
  \end{align}
  For shorthand, we introduce the notation $\noisyWeights =
  \mech(\restOfPairsDS \cup \{\targetPair\})$, $\theta$ for a value
  taken by $\noisyWeights$, and $z$ for a value taken by $\targetPair$. Then, we have
  \begin{align*}
    I(\targetPair;  \noisyWeights)
    &= \int
    p_{\targetPair}(z)
    p_{\noisyWeights| \targetPair}(\theta | z)
    \log \frac{
      p_{\noisyWeights| \targetPair}(\theta | z)
    }{
      p_{\noisyWeights}(\theta)
    }
    \dd z \,
    \dd \theta  \\
    &= \int
    p_{\targetPair}(z)
    \left(
      \int
      p_{\noisyWeights| \targetPair}(\theta | z)
      \log \frac{
        p_{\noisyWeights| \targetPair}(\theta | z)
      }{
        p_{\noisyWeights}(\theta)
      }
      \dd \theta
    \right)
    \dd z \\
    &= \int
    p_{\targetPair}(z)
    \kl( (\noisyWeights \mid \targetPair=z) \, \| \, \noisyWeights)
    \dd z \\
    &= \int
    p_{\targetPair}(z)
    \kl( (\noisyWeights \mid \targetPair=z) \, \| \, \ex_{\targetPair}
    [\noisyWeights \mid \targetPair] )
    \dd z \\
    &\leq \int
    p_{\targetPair}(z)
    p_{\targetPair} (z') \left(
      \int
      p_{\noisyWeights| \targetPair}(\theta | z)
      \log \frac{
        p_{\noisyWeights| \targetPair}(\theta | z)
      }{
        p_{\noisyWeights| \targetPair}(\theta | z')
      }
      \dd \theta
    \right)
    \dd z'
    \dd z
    \\
    &= \int
    p_{\targetPair}(z)
    p_{\targetPair} (z')
    \kl(\mech(\fullPairsDS) \| \mech(\fullPairsDS'))
    \dd z'
    \dd z
    \\
    &\leq \int
    p_{\targetPair}(z)
    p_{\targetPair} (z')
    \rho
    \dd z'
    \dd z,
    \\
    & = \rho,
  \end{align*}
  where the fifth line holds as the KL divergence is convex in its
  second argument \cite[Theorem
  12]{ervenRenyiDivergenceKullbackLeibler2014} and the seventh line
  follows from \eqref{eq:privacyKL}.
\end{proof}

We can now turn to a brief discussion on rate-distortion
and distortion-rate functions, before we develop a theorem
combining this with the mutual information bound. For more details, we refer the interested reader to the classical textbook \citep{coverElementsInformationTheory2001}. Let $\sourceRV$ be a random variable in $\reals^k$, for an arbitrary
dimension $k\geq1$, and consider a (possibly random) function
$\channel: \reals^k \to \reals^k$ which we will refer to as
\emph{channel}. We define its rate and distortion by
\[
  \rateChan(\channel)
  := I(\sourceRV;\channel(\sourceRV)),
  \qquad
  \distChan(\channel)
  := \ex\bigl[ \| \sourceRV-\channel(\sourceRV) \|_2^2\bigr].
\]
Then, we define the \emph{rate-distortion} and \emph{distortion-rate}
functions as
\begin{align}\label{eq:ratedist}
    \rateDist(D)
  := \inf_{\channel: \distChan(\channel)\leq D}
  \rateChan(\channel),
  \qquad
  \distRate(\rate)
  := \inf_{\channel: \rateChan(\channel)\leq \rate}
  \distChan(\channel).
\end{align}
We will use this notation in the remaining part of the section.

\begin{lemma}
  [Rate/distortion bounds for the squared $\ell_2$ distortion]
  \label{lem:squared-l2-rate-dist}
  Let $D>0$ and $\rate \geq 0$. Then, the rate-distortion and distortion-rate
  functions defined in \eqref{eq:ratedist} satisfy 
  \begin{align}
    \rateDist(D)
    &\geq \left[h(\sourceRV) -
    \frac{k}{2} \log \left( 2 \pi e \frac{D}{k}\right)\right]_+, \\
    \distRate(\rate)
    &\geq \frac{k}{2\pi e} \exp \left(\frac{2}{k}
    (h(\sourceRV) - \rate) \right).
  \end{align}
\end{lemma}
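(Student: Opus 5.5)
The plan is the classical Shannon lower bound argument. Fix any channel $\channel$ with $\distChan(\channel)\le D$, and write $\widehat S=\channel(\sourceRV)$. Then
\[
I(\sourceRV;\widehat S)=h(\sourceRV)-h(\sourceRV\mid\widehat S)=h(\sourceRV)-h(\sourceRV-\widehat S\mid\widehat S)\geq h(\sourceRV)-h(\sourceRV-\widehat S).
\]
The middle equality uses translation invariance of differential entropy conditional on $\widehat S$. The inequality holds because conditioning reduces entropy.

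Next, the error $E:=\sourceRV-\widehat S\in\reals^k$ has second moment $\ex\|E\|_2^2\le D$. Among all random vectors in $\reals^k$ with $\ex\|E\|_2^2\le D$, the differential entropy is maximized by $\N(0,(D/k)I_k)$. This gives
\[
h(E)\le \frac{k}{2}\log\left(2\pi e\frac{D}{k}\right).
\]
To justify this, I would first bound $h(E)\le\frac12\log((2\pi e)^k\det\Sigma)$, where $\Sigma$ is the covariance of $E$; the Gaussian is the max-entropy law at fixed covariance, and removing the mean only lowers the second moment. Then I apply AM--GM to the eigenvalues: $\det\Sigma\le(\tr\Sigma/k)^k\le(D/k)^k$. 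Combining the two displays and taking the infimum over channels gives the first bound. The positive part $[\cdot]_+$ comes from the trivial fact that mutual information is nonnegative.

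For the distortion-rate bound, take any channel with $\rateChan(\channel)\le\rate$, and set $D'=\distChan(\channel)$. If $D'=\infty$ there is nothing to prove, and $D'=0$ is impossible whenever $h(\sourceRV)>-\infty$ and the rate is finite. Otherwise, the argument above gives
\[
\rate\ge h(\sourceRV)-\frac k2\log\left(\frac{2\pi e D'}{k}\right),
\]
which rearranges to
\[
D'\ge\frac{k}{2\pi e}\exp\left(\frac2k(h(\sourceRV)-\rate)\right).
\]
Taking the infimum over such channels gives the second bound.

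The only delicate point I expect is measure-theoretic. $h(\sourceRV\mid\widehat S)$ may not be well defined if $\widehat S$ is arbitrary, for instance discrete or singular. I would handle this by assuming $h(\sourceRV)$ is finite; otherwise the bounds are trivial or vacuous. I would then use the general identity $I(\sourceRV;\widehat S)=h(\sourceRV)-h(\sourceRV\mid\widehat S)$, valid whenever $h(\sourceRV)$ is finite, with the conditional entropy defined via conditional densities of $\sourceRV$ given $\widehat S$, which exist because $\sourceRV$ has a density. The max-entropy step then applies conditionally and is averaged using concavity of $\log$ (Jensen): $h(E\mid\widehat S)\le\ex\,\frac k2\log(2\pi e\,\ex[\|E\|^2\mid\widehat S]/k)\le\frac k2\log(2\pi e D/k)$. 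This route bypasses the step $h(E\mid\widehat S)\le h(E)$ entirely.
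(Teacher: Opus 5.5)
Your proposal is correct and follows the paper's proof essentially step for step. Both use the Shannon lower bound: translation invariance, then conditioning reduces entropy, then the Gaussian maximum-entropy bound (you justify it via the covariance determinant and AM--GM where the paper cites Cover--Thomas), followed by the same rearrangement for the distortion-rate bound. One small correction to your measure-theoretic remark: conditional densities of $S$ given $\widehat S$ need not exist merely because $S$ has a density (take $\widehat S=S$). They do exist for almost every value of $\widehat S$ whenever $I(S;\widehat S)<\infty$, since then $P_{S\mid \widehat S=s}\ll P_S$, and when $I(S;\widehat S)=\infty$ the bound holds trivially.
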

\begin{proof}
   We have
  \begin{align*}
    \rateDist(D) & =
    \inf_{\channel: \ex \n \channel(\sourceRV) - \sourceRV\n_2^2 \leq D}
    I(\sourceRV; \channel(\sourceRV)) \\
    & =
    h(\sourceRV) -
    \sup_{\channel: \ex \n \channel(\sourceRV) - \sourceRV\n_2^2 \leq D}
    h(\sourceRV | \channel(\sourceRV))\\
    & =
    h(\sourceRV) -
    \sup_{\channel: \ex \n \channel(\sourceRV) - \sourceRV\n_2^2 \leq D}
    h(\sourceRV - \channel(\sourceRV) | \channel(\sourceRV))\\
    & \geq
    h(\sourceRV) -
    \sup_{\channel: \ex \n \channel(\sourceRV) - \sourceRV\n_2^2 \leq D}
    h(\sourceRV - \channel(\sourceRV) )\\
    & = h(\sourceRV) -
    \sup_{\Delta: \ex \n \Delta\n_2^2 \leq D}
    h(\Delta )\\
    & = h(\sourceRV) -
    \frac{k}{2} \log \left( 2 \pi e \frac{D}{k}\right).
  \end{align*}
Here, the second line uses $I(\sourceRV; \channel(\sourceRV)) = h(\sourceRV) - h(\sourceRV | \channel(\sourceRV))$; the third line uses that the conditional differential entropy is translation-invariant; the fourth line uses that conditioning decreases entropy; in the fifth line we take $\Delta = \sourceRV - \channel(\sourceRV)$ and use that the space of all
      random variables $\Delta$ with $\n \Delta\n_2^2 \leq D$ is
    equivalent to $\{\sourceRV-\channel(\sourceRV): \ex \n \channel(\sourceRV) - \sourceRV\n_2^2 \leq D  \}$; the sixth line uses that, under this distortion condition, the max-entropy
    distribution is Gaussian \cite[Theorem 9.6.5]{coverElementsInformationTheory2001}.

  Combining this with $\rateDist(D)\geq 0$ gives the first bound.
  For any channel with $\rateChan(\channel)\leq \rate$, we have
  \[
    h(\sourceRV)-\frac{k}{2}\log\left(2\pi e\frac{\distChan(\channel)}{k}\right)
    \leq \rateDist(\distChan(\channel))
    \leq \rateChan(\channel)\leq \rate.
  \]
  Rearranging and taking the infimum over such channels gives
  \[
    \distRate(\rate)\geq \frac{k}{2\pi e}
    \exp\left(\frac{2}{k}(h(\sourceRV)-\rate)\right).
  \]
\end{proof}

\getkeytheorem{rho-entropy-reconstruction}

Note that we assume that the differential entropy is well-defined, \ie that \(x_t\) has a density with respect to the Lebesgue measure on \(\mathbb R^d\), has finite differential entropy, and \(0<\mathbb E\|x_t\|_2^2<\infty\).
\begin{proof}
  Fix $\restOfPairsDS$ and write $\targetPair=(\targetPoint,\targetLabel)$.
  Consider the shorthand $\theta(z) = \mech(\restOfPairsDS \cup \{z\})$.
  The target pair $\targetPair$, the
  resulting learned weights $\theta(\targetPair)$, and the reconstruction
  $\attackPoint=\attack(\restOfPairsDS,\theta(\targetPair))$ form a Markov chain
  \[
    \targetPair \to \theta(\targetPair) \to \attackPoint.
  \]
  Since $\targetPoint$ is the feature component of $\targetPair$, we have
  \[
    I(\targetPoint; \attackPoint)
    \leq I(\targetPair; \attackPoint)
    \leq I(\targetPair; \theta(\targetPair)) \leq \rho,
  \]
  where the first two steps follow from the data-processing inequality,
  and the last step holds due to \Cref{lem:zcdp-mut-info}, since
  $\mech$ is $\rho$-zCDP. All information quantities here are evaluated
  conditional on the fixed $\restOfPairsDS$.

  Now, define the induced channel $\channel(\targetPoint) = \attackPoint$, averaging over the conditional
  distribution of $\targetLabel$ and the randomness of the mechanism and attack.
  We apply the rate-distortion definitions to the feature variable
  $\targetPoint$, taking $\sourceRV=\targetPoint$ and $k=\dimInputs$ in
  \cref{lem:squared-l2-rate-dist}.
  Since $\rateChan(\channel) = I(\targetPoint; \attackPoint) \leq \rho$, using the definition of distortion-rate function $\distRate$ in \eqref{eq:ratedist} and
  \cref{lem:squared-l2-rate-dist}, we have
  \[
    \ex \n \targetPoint - \attackPoint\n_2^2 = \distChan(\channel) \geq
    \distRate(\rho) \geq  \frac{d}{2 \pi e}
    \exp\left(\frac{2}{d}(h(\targetPoint) -\rho)\right).
  \]
  Finally, to remove the effect of the scale of $\targetPoint$, we apply the fact that $h(AX) = h(X)+ \log |\operatorname{det}(A)|$ \cite[Theorem 9.6.4]{coverElementsInformationTheory2001} with $\targetPoint' = \sqrt{\frac{d}{\ex \n \targetPoint\n_2^2}} \targetPoint$, and divide by $\ex\n\targetPoint\n_2^2$ to obtain \eqref{eq:lower-bound-scale-invariant}. 
\end{proof}

\section{Proof of the algorithmic upper bound}
\label{subsec:huber-reconstruction-attack}

\begin{lemma}
  \label{lem:reconstruction-error-snr}
  Let $\targetPoint \in \reals^d$, and let %
  $q \in \reals^d$ be such that $q = a \targetPoint + w \neq 0$,
  for some real number $a\neq 0$, and some $w\in\reals^d$.
  Defining $\attackPoint= \| x_t \|_2 \frac{q}{\|q\|_2}$, we have
  \[
    \reconstructionError
    \leq
    \frac{ 4 \|w\|_2^2}{a^2}.
  \]

\end{lemma}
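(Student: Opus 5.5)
We reduce to a two-dimensional statement about the angle between $q$ and $\targetPoint$. Write $r:=\|\targetPoint\|_2$, $u:=\targetPoint/r$, and $\hat u := q/\|q\|_2$, so that $\attackPoint = r\hat u$. Choosing $\tau=\sign(a)\in\{-1,1\}$ gives
\[
\reconstructionError \le \|\targetPoint - \sign(a)\, r\hat u\|_2^2 = r^2\,\|u-\sign(a)\hat u\|_2^2 .
\]
Since the left-hand side of the claim is always at most $r^2$ (take $\tau=0$), we may assume $\|w\|_2 < |a| r/2$. Otherwise $4\|w\|_2^2/a^2\ge r^2$ and the bound is trivial.

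In the nontrivial regime, set $v:=\sign(a)\,q/|a| = r u + w/a$. Then $\sign(a)\hat u = v/\|v\|_2$, so we need to bound $\|u - v/\|v\|_2\|_2$ when $v = ru+e$ with $\|e\|_2=\|w\|_2/|a|$.

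The plan is to use the standard normalization inequality: for nonzero vectors $x,y$,
\[
\left\|\frac{x}{\|x\|_2}-\frac{y}{\|y\|_2}\right\|_2\le \frac{2\|x-y\|_2}{\|x\|_2}.
\]
This follows from the triangle inequality by inserting $y/\|x\|_2$: one gets $\|x-y\|_2/\|x\|_2 + \bigl|\|x\|_2-\|y\|_2\bigr|/\|x\|_2$, and the second term is at most $\|x-y\|_2/\|x\|_2$. Apply it with $x=ru$ and $y=v$. This gives $\|u - v/\|v\|_2\|_2\le 2\|e\|_2/r$, hence
\[
r^2\|u-\sign(a)\hat u\|_2^2\le 4\|e\|_2^2 = \frac{4\|w\|_2^2}{a^2}.
\]
We also need $v\neq 0$, which holds because $q\ne 0$ by hypothesis.

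Note the inequality actually holds without the case split, so the reduction to $\|w\|_2<|a|r/2$ is only a safety net. The only delicate points are the sign choice of $\tau$, which is why the minimum ranges over $[-1,1]$, and the nonvanishing of $q$. Neither is a real obstacle, so this lemma should be routine.
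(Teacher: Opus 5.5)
Your proof is correct and is essentially the paper's argument: take $\tau=\sign(a)$, then bound the distance between normalized vectors by inserting an intermediate term and applying the triangle and reverse triangle inequalities. The paper compares $q/\|q\|_2$ with $a\targetPoint/\|a\targetPoint\|_2$, which is your inequality rescaled by $|a|$. One small correction to your closing remark: the case split is not merely a safety net, because it is what covers $\targetPoint=0$, where $u$ is undefined and the normalization inequality does not apply. The paper handles that case directly by taking $\tau=0$.
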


\begin{proof}
  If $\targetPoint=0$, choose $\tau=0$. Otherwise, set
  $\targetRadius=\|\targetPoint\|_2>0$ and choose
  $\tau=\sign(a)\in[-1,1]$.
  Then
  \begin{align*}
    \frac{1}{\targetRadius}\|\targetPoint-\tau\attackPoint\|_2
    &=
    \left\|\frac{q}{\|q\|_2}
    -\frac{a\targetPoint}{\|a\targetPoint\|_2}\right\|_2 \\
    &\leq
    \left\|\frac{q}{\|q\|_2}
    -\frac{q}{\|a\targetPoint\|_2}\right\|_2
    +
    \left\|\frac{q-a\targetPoint}{\|a\targetPoint\|_2}\right\|_2 \\
    &=
    \frac{\bigl|\|a\targetPoint\|_2-\|q\|_2\bigr|}{\|a\targetPoint\|_2}
    +\frac{\|w\|_2}{\|a\targetPoint\|_2} \\
    &\leq
    \frac{2\|w\|_2}{|a|\targetRadius}.
  \end{align*}
  The last inequality follows from the reverse triangle inequality
  and $q=a\targetPoint+w$. Multiplying by $\targetRadius$, squaring,
  and minimizing over $\tau\in[-1,1]$ gives the desired result.
\end{proof}

\begin{lemma}
  [A property of $\huberLooGrad$]
  \label{lem:huber-loo-grad-difference-bounded}
  Let $\huberLooGrad$ be defined as in
  \eqref{eq:huber-loo-gradient-map}. Then, for any $\theta_1,
  \theta_2\in\reals^d$, we have
  \begin{equation}
    \label{eq:huber-gradient-difference-exact}
    \huberLooGrad(\theta_1)-\huberLooGrad(\theta_2)
    =
    M(\theta_1-\theta_2),
  \end{equation}
  for some positive semi definite matrix $M$ that satisfies
  \begin{equation}
    \label{eq:huber-difference-matrix-order}
    n \lambda I
    \preceq M
    \preceq \restOfDS^\top\restOfDS+n\lambda I.
  \end{equation}

\end{lemma}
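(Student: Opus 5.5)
The plan is to use the integral form of the mean value theorem, applied separately to each summand of $\huberLooGrad$. The function $\psi_{\clipConstant}$ is $1$-Lipschitz and nondecreasing. For each $i\neq t$, set $u_i:=x_i^\top\theta_1-y_i$ and $v_i:=x_i^\top\theta_2-y_i$, and define the scalar
\[
m_i:=\begin{cases}\dfrac{\psi_{\clipConstant}(u_i)-\psi_{\clipConstant}(v_i)}{u_i-v_i}, & u_i\neq v_i,\\[1ex] 1, & u_i=v_i.\end{cases}
\]
Monotonicity and the $1$-Lipschitz property give $m_i\in[0,1]$. By construction,
\[
\psi_{\clipConstant}(u_i)-\psi_{\clipConstant}(v_i)=m_i(u_i-v_i)=m_i\,x_i^\top(\theta_1-\theta_2),
\]
and this identity also holds trivially when $u_i=v_i$. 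Any value of $m_i$ in $[0,1]$ would work in that degenerate case.

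Summing over $i\neq t$ and adding the regularization term $n\lambda(\theta_1-\theta_2)$ gives the exact identity \eqref{eq:huber-gradient-difference-exact}, with
\[
M:=\sum_{i\in[n],\,i\neq t} m_i\,x_ix_i^\top+n\lambda I .
\]
This $M$ is symmetric. For the ordering \eqref{eq:huber-difference-matrix-order}, fix any $v$. Then
\[
v^\top Mv=\sum_{i\neq t} m_i(x_i^\top v)^2+n\lambda\|v\|_2^2 .
\]
Since $0\le m_i\le 1$, this quantity is at least $n\lambda\|v\|_2^2$ and at most $\sum_{i\neq t}(x_i^\top v)^2+n\lambda\|v\|_2^2=v^\top(\restOfDS^\top\restOfDS+n\lambda I)v$. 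Positive semidefiniteness follows from the lower bound because $\lambda>0$.

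The only point needing care is to check that $m_i\in[0,1]$ for the Huber derivative. This follows from writing $\psi_{\clipConstant}(a)=\max\{-\clipConstant,\min\{a,\clipConstant\}\}$, which is a composition of monotone $1$-Lipschitz maps. Alternatively, $\psi_{\clipConstant}$ is absolutely continuous with a.e.\ derivative $\mathbf 1\{|a|<\clipConstant\}\in\{0,1\}$, so $m_i=\int_0^1\psi_{\clipConstant}'(v_i+s(u_i-v_i))\,\mathrm ds\in[0,1]$. I do not expect any genuine obstacle. The matrix $M$ depends on $\theta_1$ and $\theta_2$, but the statement only asks for existence of such an $M$.
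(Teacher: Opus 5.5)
Your proposal is correct and takes essentially the same route as the paper. The paper also writes each Huber-derivative difference as a difference quotient $D_{ii}\in[0,1]$, justified by monotonicity and $1$-Lipschitzness of $\psi$. This gives $M=X_-^\top D X_-+n\lambda I$, and the ordering follows directly from $0\le D_{ii}\le 1$. The only cosmetic difference is that the paper sets $D_{ii}=0$ rather than $1$ when $x_i^\top(\theta_1-\theta_2)=0$, which, as you note, is immaterial.
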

\begin{proof}
  Define a diagonal matrix $D \in \reals^{(n - 1) \times (n - 1)}$
  with entries
  \[
    D_{ii}:=
    \begin{cases}
      \displaystyle
      \frac{
        \psi_{\clipConstant}(x_i^\top\theta_1-y_i)
        -\psi_{\clipConstant}(x_i^\top\theta_2-y_i)
      }{
        x_i^\top(\theta_1-\theta_2)
      },
      &\text{if } x_i^\top(\theta_1-\theta_2)\ne0,\\[3mm]
      0,
      &\text{otherwise},
    \end{cases}
  \]
  where we consider non-target points $\restOfDS = \{x_i\}_{i \in [n-1]}$.
Then,   \eqref{eq:huber-loo-gradient-map} yields
  \[
    \huberLooGrad(\theta_1)-\huberLooGrad(\theta_2) = \sum_{i \in [n-1]}
    x_i D_{ii} x_i^\top(\theta_1-\theta_2) + \lambda n (\theta_1 -
    \theta_2) = M (\theta_1 - \theta_2),
  \]
  with
  \begin{equation}
    M = \restOfDS^\top D\restOfDS+n\lambda I.
  \end{equation}
  Since the gradient of the clipped loss function
  $\psi_{\clipConstant}$ is non-decreasing and
  $1$-Lipschitz, we have
  $0\leq D_{ii}\leq1$, which implies that $\restOfDS^\top D\restOfDS$
  is p.s.d. and such that $\restOfDS^\top D\restOfDS \preceq
  \restOfDS^\top \restOfDS$.
\end{proof}

\getkeytheorem{thm:direct-huber-reconstruction}

\begin{proof}
  Consider the shorthand for the residuals
  \[
    r_t(\theta) := \targetPoint^\top\theta-\targetLabel.
  \]
  As noted in
  \eqref{eq:huber-loo-grad-at-full-opt-weights}, we have that
  \[
    -\huberLooGrad(\optWeights)
    =\targetPoint\,
    \psi_{\clipConstant}
    (\targetPoint^\top\optWeights-\targetLabel).
  \]
  On the perturbed noisy weights $\noisyWeights$, we have
  \begin{equation}
    \label{eq:huber-attack-signal-plus-noise}
    -\huberLooGrad(\noisyWeights) = \targetPoint\,\psi_{\clipConstant}
    (r_t(\optWeights)) -
    (\huberLooGrad(\noisyWeights) - \huberLooGrad(\optWeights)).
  \end{equation}
  \newcommand{\targetResidOpt}{r_t(\optWeights)}
  \newcommand{\targetResidRest}{r_t(\restWeights)}
  Firstly, we lower bound the absolute value of
  $\psi_{\clipConstant}(r_t(\optWeights))$.
  A useful tool is the optimal solution to the same learning problem on
  all datapoints except the target, \ie, just on $\restOfDS$, with a
  slightly altered regularization parameter $\frac{n}{n-1}\lambda$: %

  \begin{equation}
    \label{eq:huber-loo-optimal-weights}
    \restWeights:=\argmin_{\theta\in\reals^d}
    \left\{
      \frac{1}{n-1}\sum_{i\in[n], i\neq t}
      \huberLoss(x_i^\top\theta-y_i)
      +\frac{1}{2}\frac{n\lambda}{(n-1)}\|\theta\|_2^2
    \right\}.
  \end{equation}
  This objective is strongly convex, so its minimizer is unique,
  and its gradient is $\huberLooGrad/(n-1)$. Hence
  $\huberLooGrad(\restWeights)=0$.
  Due to \cref{lem:huber-loo-grad-difference-bounded}, we have
  \[
    \optWeights - \restWeights = M^{-1} (\huberLooGrad(\optWeights) -
    \huberLooGrad(\restWeights)) = M^{-1} \huberLooGrad(\optWeights),
  \]
  where the second step holds as
  $\huberLooGrad(\restWeights) = 0$. Note that $M \in \reals^{d
  \times d}$ is a p.s.d. matrix such that $\opnorm{M^{-1}} \leq (n
  \lambda)^{-1}$. Then, we have
  \begin{align}
    \targetResidOpt
    &= \targetPoint^\top \optWeights - \targetLabel \\
    &=
    \targetPoint^\top (\optWeights - \restWeights) +
    \underbrace{ \targetPoint^\top \restWeights - \targetLabel}_{:=
    \targetResidRest} \\
    &= \targetPoint^\top M^{-1}\huberLooGrad(\optWeights) +
    \targetResidRest
    \\
    &= -\targetPoint^\top M^{-1}\targetPoint
    \psi_{\clipConstant}(\targetResidOpt) + \targetResidRest.
    \label{eq:huber-two-residual-relation}
  \end{align}
  Since $\targetResidOpt$ and $\psi_{\clipConstant}(\targetResidOpt)$
  have the same sign, and
  $0 \leq \targetPoint^\top M^{-1}\targetPoint \leq
  \frac{\featureRadiusBound^2}{n\lambda}$,
  we have
  \[
    |\targetResidRest|
    = |\targetResidOpt|
    +\targetPoint^\top M^{-1}\targetPoint
    |\psi_{\clipConstant}(\targetResidOpt)|
    \leq
    \left(1+\frac{\featureRadiusBound^2}{n\lambda}\right)
    |\targetResidOpt|.
  \]
  Thus, for any $u\geq0$, we have
  $$\prob (|\targetResidOpt| \leq u ) \leq
  \prob\left(|\targetResidRest|\leq \left(1+
  \frac{\featureRadiusBound^2}{n\lambda}\right) u\right).$$
  The residual $\targetResidRest$ decomposes as
  \(
    \targetResidRest
    =
    \targetPoint^\top(\restWeights-\trueWeights)-\labelNoise_t.
  \)
  Conditional on $(\targetPoint,\restOfPairsDS)$, the first term is
  fixed and $\labelNoise_t\sim\N(0,\labelStdDev^2)$ remains independent.
  Its density is bounded by $1/(\labelStdDev\sqrt{2\pi})$, so
  \[
    \prob\left(
      |\targetResidRest|\leq\labelStdDev\eta\sqrt{\pi/2}
    \right)\leq\eta.
  \]

  Together, this gives
  \begin{equation}
    \label{eq:huber-signal-threshold}
    \prob(|\targetResidOpt| \leq u ) \leq \eta, \qquad
    u
    :=
    \frac{
      \labelStdDev\eta\sqrt{\pi/2}
    }{
      1+\featureRadiusBound^2/(n\lambda)
    }.
  \end{equation}

  Next, we upper bound the term
  $\| \huberLooGrad(\noisyWeights) - \huberLooGrad(\optWeights)\|_2$.
  Using \cref{lem:huber-loo-grad-difference-bounded}, since
  $\noisyWeights - \optWeights = \dpNoise$ by definition
  (see \eqref{eq:output-perturbation}),
  we can write
  \[
    \huberLooGrad(\noisyWeights) - \huberLooGrad(\optWeights) =
    \underbrace{(\restOfDS^\top D\restOfDS+n\lambda I)}_{\Mdp} \dpNoise,
  \]
  where $\opnorm{\Mdp} \leq \opnorm{\restOfDS^\top \restOfDS} +
  \lambda n$.
  Due to \cref{ass:full-dim-attack-assumption}, we can apply Remark 5.40 in
  \cite{vershrandmat} with deviation parameter $t=\sqrt{n+d}$ to get
  \[
    \opnorm{\restOfDS^\top \restOfDS}
    \leq (n-1)\opnorm{\ex[x x^\top]}+C(n+d),
  \]
  with probability at least $1-2e^{-c_1(n+d)}$, for absolute
  constants $C,c_1>0$.
  Then, again by \cref{ass:full-dim-attack-assumption}, we have that
  $\opnorm{\ex[x x^\top]}\leq C$ (see e.g.\ the argument
  in Lemma C.1 of \cite{bombari2024privacy}), which readily gives,
  after enlarging $C$,
  \begin{align}\label{eq:huber-known-gram-upper-bound}
    \opnorm{\Mdp}\leq n\lambda+C(n+d).
  \end{align}
  Theorem 3.1.1 in \cite{vershynin2018high}, applied to the independent
  Gaussian coordinates of $\dpNoise/\dpStdDev$, yields
  $\|\dpNoise\|_2=O(\sqrt d\dpStdDev)$ with probability at least
  $1-2e^{-c_2d}$, for an absolute constant $c_2>0$.
  Taking the intersection with the previous event, whose total
  failure probability is at most $2e^{-c_1(n+d)}+2e^{-c_2d}$, gives
  \begin{equation}
    \label{eq:huber-transformed-privacy-noise-bound}
    \| \huberLooGrad(\noisyWeights) - \huberLooGrad(\optWeights) \|_2
    = \|\Mdp\dpNoise\|_2
    = O \left( \dpStdDev \sqrt d \left( n \lambda + n + d \right) \right).
  \end{equation}

  Then, merging \eqref{eq:huber-signal-threshold} and
  \eqref{eq:huber-transformed-privacy-noise-bound} in
  \eqref{eq:huber-attack-signal-plus-noise}, and using
  \cref{lem:reconstruction-error-snr}, gives, with probability at least
  $1-\eta-2e^{-c_1(n+d)}-2e^{-c_2d}$,
  \begin{align}\label{eq:beforepluggingsigma}
    \frac{\reconstructionError}{\featureRadiusBound^2}
    \leq C_1
    \frac{n^2\left(\lambda+ (1+d/n)\right)^2\dpStdDev^2 d}
    {\featureRadiusBound^2\min\{u^2,\clipConstant^2\}},
  \end{align}
  where we used $|\psi_{\clipConstant}(r)|^2=\min\{r^2,\clipConstant^2\}$
  and $|\targetResidOpt|>u$ on the signal event.

  Recalling $u$ from above and $\dpStdDev$ from
  \cref{prop:output-pert-is-private}:
  \[
    u
    :=
    \frac{
      \labelStdDev\eta\sqrt{\pi/2}
    }{
      1+\featureRadiusBound^2/(n\lambda)
    },
    \qquad
    \dpStdDev
    =
    \frac{\sqrt{2}\featureRadiusBound\clipConstant}
    {\lambda\sqrt{\rho}\numPoints},
  \]
  the bound becomes, for an absolute constant $C_2>0$,
  \[
    \frac{\reconstructionError}{\|\targetPoint\|_2^2}
    \leq
    C_2\frac{\featureRadiusBound^2}{\|\targetPoint\|_2^2}
    \max\left\{1,
      \frac{2\clipConstant^2}{\pi\labelStdDev^2\eta^2}
      \left(1+\frac{\featureRadiusBound^2}{n\lambda}\right)^2
    \right\}
    \left(1+\frac{C}{\lambda}\right)^2
    \left(1+\frac dn\right)^2
    \frac{d}{\rho}.
  \]
  Here we used
  \[
    1+\frac{C(1+d/n)}{\lambda}
    \leq
    \left(1+\frac{C}{\lambda}\right)\left(1+\frac dn\right).
  \]
  By \cref{ass:full-dim-attack-assumption},
  $\|\targetPoint\|_2^2=\Theta(d)$ and
  $\featureRadiusBound^2=\Theta(d)$, so the ratio displayed in the
  bound is $O(1)$. Absorbing it into the absolute constant proves
  \eqref{eq:huber-reconstruction-bound}. A union bound over
  the Gram-matrix event, the privacy-noise event, and the signal
  anti-concentration event gives total failure probability at most
  $\eta+2e^{-c_1(n+d)}+2e^{-c_2d}$.
\end{proof}

\section{Proofs for the extension to low-dimensional data}
\label{sec:lower-bounds-sparse-priors}

\getkeytheorem{low-dim-entropy-thm}

\begin{proof}
  Let $U\in\reals^{d\times s}$ have orthonormal columns spanning
  $\operatorname{Im}(P)$, so that $P=UU^\top$ and $U^\top U=I_s$.
  Set $\targetProj=P\targetPoint$ and $\attackProj=P\attackPoint$,
  both in $\operatorname{Im}(P)\subseteq\reals^d$, and let
  $v=U^\top\targetPoint$ and $\widehat v=U^\top\attackPoint$ be their
  coordinates in $\reals^s$. Thus $\targetProj=Uv$,
  $\attackProj=U\widehat v$,
  and the entropy notation in the theorem means
  $h_s(\targetPoint)=h(v)$.

  Fix $\restOfPairsDS$. As in the proof of
  \cref{theorem:rho-entropy-reconstruction}, the data-processing
  inequality and \cref{lem:zcdp-mut-info}, applied to
  $\targetPair=(\targetPoint,\targetLabel)$, give
  \[
    I(v;\widehat v)
    \leq I(\targetPoint;\attackPoint)
    \leq\rho.
  \]
  Let us apply \cref{lem:squared-l2-rate-dist} with $\sourceRV=v$ and $k=s$,
  to the source $v$ and reconstruction
  $\widehat v$ in $\reals^s$. Since $U$ preserves Euclidean norms,
  we obtain
  \begin{equation}
    \ex\|\targetProj-\attackProj\|_2^2
    =\ex\|v-\widehat v\|_2^2
    \geq
    \frac{s}{2\pi e}
    \exp\left(\frac{2}{s}(h_s(\targetPoint)-\rho)\right).
  \end{equation}
  Letting $\targetProjRest=(I-P)\targetPoint
  \in\operatorname{Im}(P)^\perp$, we have
  \begin{align}
    \|\targetPoint-\attackPoint\|_2^2
    &=\|\targetProj-P\attackPoint\|_2^2
    +\|\targetProjRest-(I-P)\attackPoint\|_2^2 \\
    &\geq\|\targetProj-\attackProj\|_2^2.
  \end{align}
  Taking expectations, dividing by  $\ex\|P\targetPoint\|_2^2$ and using
  $h_s(x_s')=h_s(\targetPoint)+\frac
  s2\log\bigl(s/\ex\|P\targetPoint\|_2^2\bigr)$ (again, via an application of Theorem 9.6.4 in \cite{coverElementsInformationTheory2001}) gives
  \eqref{eq:sparse-reconstruction-lower-bound-relative}.  
\end{proof}

\getkeytheorem{thm:low-dim-huber-reconstruction}

\begin{proof}
  Consider the shorthand $a:=
  \psi_{\clipConstant}
  (\targetPoint^\top\optWeights-\targetLabel)$. Recall that $\dpNoise=\noisyWeights-\optWeights$ is the Gaussian
noise defined in \eqref{eq:output-perturbation} and
$\Mdp=\restOfDS^\top D\restOfDS+n\lambda I$ is the matrix
constructed in the proof of \cref{thm:direct-huber-reconstruction}.
  Projecting both sides of \eqref{eq:huber-attack-signal-plus-noise}
  over $P$ gives
  \begin{equation}
    - P \huberLooGrad(\noisyWeights)
    =
    a P\targetPoint - P\Mdp\dpNoise = a \targetPoint - P\Mdp\dpNoise.
  \end{equation}
  Notice that we have
  \[
    P \Mdp \dpNoise = P \Mdp P\dpNoise + P\Mdp(I-P)\dpNoise = P \Mdp P \dpNoise,
  \]
  where we use that $\restOfDS (I-P) = 0$, due to $x_i \in \mathrm{Im}(P)$ for all $i
  \in [n]$. 
  
  Let $U\in\mathbb R^{d\times s}$ have orthonormal columns
spanning $\operatorname{Im}(P)$, and set $Z_-:=\restOfDS U$.
The rows of $Z_-$ are independent $O(1)$-subGaussian vectors
in $\mathbb R^s$. Applying the concentration argument used
for \eqref{eq:huber-known-gram-upper-bound}, now in dimension
$s$, gives
\[
  \opnorm{\restOfDS^\top\restOfDS}
  =\opnorm{Z_-^\top Z_-}
  \leq C(n+s)
\]
with probability at least $1-2e^{-c_2 n}$.
Consequently,
\[
  \opnorm{P\Mdp}\leq\opnorm{\Mdp}
  \leq n\lambda+C(n+s).
\]
  Then, we have that
  \[
    \| P \Mdp \dpNoise \|_2 \leq \opnorm{P \Mdp}  \| P \dpNoise \|_2
    =  O \left( \left( (1+\lambda) n + s \right) \dpStdDev \sqrt s \right),
  \]
  with probability at least $1 - 2 e^{- c_2 n} - 2 e^{- c_3 s}$,
  where the last equality follows from a similar argument as the one
  used to obtain \eqref{eq:huber-transformed-privacy-noise-bound}.

  Then, the argument follows the same path as the one for the proof
  of \cref{thm:direct-huber-reconstruction}, after
  \eqref{eq:huber-transformed-privacy-noise-bound} and using
  \cref{lem:reconstruction-error-snr}. This yields, with probability at least
  $1- \eta -2 e^{-c_2 n }-2 e^{-c_3 s}$,
  \[
    \frac{\reconstructionError}{\|\targetPoint\|_2^2}
    \leq C_1\frac{\featureRadiusBound^2}{\|\targetPoint\|_2^2}
    \frac{n^2\left(\lambda+ (1+s/n)\right)^2\dpStdDev^2 s}
    {\featureRadiusBound^2\min\{u^2,\clipConstant^2\}},
  \]
  where $u$ is defined as a function of $\eta$ in
  \eqref{eq:huber-signal-threshold}. Apart from the factor
$\featureRadiusBound^2/\|\targetPoint\|_2^2$,
the RHS is the bound in \eqref{eq:beforepluggingsigma}
with $d$ replaced by $s$. Furthermore,
$\|\targetPoint\|_2^2=\Theta(s)$ and
$\featureRadiusBound^2=\Theta(s)$, so that $\featureRadiusBound^2/\|\targetPoint\|_2^2=O(1)$.
Following the same substitution as at the end of the proof
of \cref{thm:direct-huber-reconstruction}, with
$n=\Omega(s)$, $\lambda=\Omega(1)$, and
$\clipConstant=O(\labelStdDev\eta)$, we obtain the desired
result by taking $\eta$ to be a sufficiently small absolute
constant.
\end{proof}

\section{Comparison with \citet{balleReconstructingTrainingData2022}}
\label{app:comparison-bounds}

Corollary~4 in \citep{balleReconstructingTrainingData2022} bounds reconstruction success in terms of the
small-ball probability
$\kappa(r):=\sup_{v\in\reals^d}\prob(\|\targetPoint-v\|_2\leq r)$.
Writing $\bar\kappa(r)\in(0,1]$ for an upper bound on $\kappa(r)$,
we convert their guarantee to expected squared error by integrating
the corresponding tail bound:
\begin{equation}
  \label{eq:balle-expected-error-comparison}
  D:=\ex\|\targetPoint-\attackPoint\|_2^2\geq\int_0^\infty 2r\left[
    1-\exp\left(-\left[
      \sqrt{\log(1/\bar\kappa(r))}-\sqrt{\rho}
    \right]_+^2\right)
  \right]\,dr.
\end{equation}
This follows from
$D=\int_0^\infty 2r\prob(\|\targetPoint-\attackPoint\|_2>r)\,dr$.

For $\targetPoint\sim\N(0,I_d)$, we use the Chernoff bound from
their proof of Proposition~7,
\[
  \bar\kappa(r)=
  \begin{cases}
    \exp\!\left[\dfrac d2\left(1-\dfrac{r^2}{d}
      +\log\dfrac{r^2}{d}\right)\right],&0<r\leq\sqrt d,\\
    1,&r>\sqrt d.
  \end{cases}
\]
For the uniform distribution on the ball of radius $\sqrt d$, their
Proposition~6 proof gives, after rescaling,
$\kappa(r)=\min\{(r/\sqrt d)^d,1\}$.
Thus, our comparison uses the small-ball bounds in their proofs,
retaining their radius dependence before the asymptotic
simplifications in the statements of the propositions.

\section{Experimental details and additional numerical results}
\label{app:experimental-details}
This appendix describes the datasets (Appendix \ref{app:datasets}) and the
experimental procedure (Appendix \ref{app:experimental-process}), and then
provides more details for the figures in the main text. We provide
further visual reconstructions for the full-dimensional attack
(Appendix \ref{app:visual-reconstruction-seeds})
and the remaining experiments on the low-dimensional setting (Appendix \ref{app:pca-reconstruction-rest}). 
Parameter configurations are given in
Appendix \ref{app:figure-hyperparameters}.

\subsection{Datasets}
\label{app:datasets}

\paragraph{Synthetic data.}
We draw $\numPoints=1000$ training examples independently and uniformly
from the sphere of radius $\featureRadiusBound:=\sqrt{\dimInputs}$. We draw
$\trueWeights$ uniformly from the unit sphere and generate labels according to \cref{eq:datamodel}.  Validation
and test sets of $10{,}000$ examples are drawn from the same distribution.

\paragraph{Synthetic rank-$s$ data.}
For the low-dimensional experiments, we design a distribution that
has a subspace with a strong signal, and a weak component on its
complement. For $s<\dimInputs$, each example is initially generated as
$x=\targetProj+\targetProjRest$, where the two components are independent
and uniform on spheres in complementary coordinate blocks of size $s$ and
$\dimInputs-s$, with respective radii $\sqrt{s}$ and
$\bulkRatio\sqrt{s/(1-\bulkRatio^2)}$. Then,
$\ex[\targetProj\targetProj^\top]=I$ on the signal block,
$\ex[\targetProjRest\mid\targetProj]=0$, and every example has norm
$\|x\|_2=\sqrt{s/(1-\bulkRatio^2)}$, with
$\|\targetProjRest\|_2/\|x\|_2=\bulkRatio$.
The experiments in \cref{fig:pca-reconstruction-synthetic} use
$\bulkRatio=0.01$, so the data lies near, rather than exactly in, the
signal subspace.
A fixed Householder reflection is applied to every split, mixing the
coordinate blocks without changing these norms. The attacker estimates
the rank-$s$ projector from the leading right singular vectors of
$\restOfDS$; the true signal subspace is not supplied to the attack.
The feature clipping radius $\featureRadiusBound$ is set $0.01$ above the
sample norm to avoid clipping due to rounding. We use $1000$ training
examples and $2000$ examples in each of the validation and test sets,
with $\trueWeights$ and labels generated as in the preceding paragraph.

\paragraph{CIFAR-10.}
We form a binary frog-versus-truck problem from classes $6$ and $9$.  The
official training data for those classes are split into $8000$ training and
$2000$ validation examples, and the official test split provides $2000$ test
examples.

\paragraph{ImageNet.}
We form binary problems from pairs of ImageNet-21K classes, specifically African
elephant (\texttt{n02504458}) versus giant panda (\texttt{n02510455}).  The
pooled images of each class are shuffled and split $80/10/10$, giving $2160$
training and $270$ validation and test examples. Images are resized
by downscaling to produce data of varying dimension.

\paragraph{Image preprocessing.}
RGB pixel values are mapped to $[0,1]$ and centered as follows: for each dataset and resolution, we subtract a
fixed public mean image, fitted using $10{,}000$ training
images from classes outside the private binary pair. The same mean is reused across the training, validation, and test
splits. Each centered feature vector is then rescaled to norm
$\sqrt{\dimInputs}$, with $\featureRadiusBound=\sqrt{\dimInputs}$.
Images retain their binary class labels, and label noise is not added.

\subsection{Experimental procedure}
\label{app:experimental-process}

For each run, data is generated as described above. The learner's
private linear regression is implemented using a damped Newton solver
on the clipped, regularized Huber objective in
\eqref{eq:clippedminimization}, running until convergence.

The private release is implemented by adding Gaussian noise as in
\eqref{eq:output-perturbation}, with the variance set by
\eqref{eq:output-perturbation-noise-calibration}.

\paragraph{Replicates and aggregation.}
Each quantitative synthetic setting attacks one target for each of three
independent seeds, hence three target attacks in total. Each quantitative image
setting attacks one target from each binary class for each of three seeds,
hence six target attacks in total. Hyperparameters are selected using the
mean validation risk over these replicates. Each plotted heatmap cell or curve point is then the
mean reconstruction error over the same target attacks. The visual
reconstruction panels are not averaged: each displays four targets (two from
each class) for a single random seed. The numbers of seeds and targets per class
used for each figure are listed in
Appendix~\ref{app:figure-hyperparameters}.

\paragraph{Privacy of hyperparameter selection.}

Our experiments compare output-perturbation mechanisms to illustrate the theoretical scaling laws, rather than evaluate an end-to-end private model-selection procedure. We therefore select hyperparameters using validation performance without accounting for the privacy cost of selection. The reported \(\rho\) guarantees apply to each trained mechanism, not the complete process including hyperparameter selection. Naturally, a real-world deployment on sensitive data would require private hyperparameter selection, for example using the methods of \cite{papernot2022hyperparameter, koskela2023practical}, and would give guarantees that covered the entire process.

\paragraph{Image visualisation.}

For display of centered reconstructions, we choose the sign to give a
nonnegative inner product with the centered, normalized target. We then
multiply by the target's saved inverse normalization scale, add back the
public mean image, and clip RGB values to $[0,1]$. 
We use the sign and scale of the target for visualization only.

\newpage
\subsection{Projected reconstruction: remaining figures}
\label{app:pca-reconstruction-rest}

\begin{figure}[H]
\centering
\includegraphics[width=0.55\linewidth]{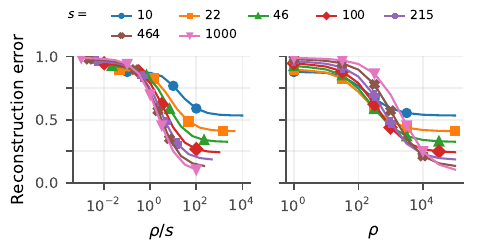}
\caption{Projected  reconstruction error on ImageNet
(elephant vs.\ panda) data with $\dimInputs=12{,}288$. The projected attack
uses a rank-$s$ PCA subspace estimated from the known data. The left panel
plots the projected attacks against $\rho/s$; the right panel plots them against $\rho$. Colours indicate
the projection rank $s$.}
\label{fig:pca-reconstruction-imagenet}
\end{figure}

\subsection{Further visual reconstructions}
\label{app:visual-reconstruction-seeds}
In addition to \cref{fig:visual-reconstruction}, we present several more examples of reconstruction of ImageNet data in  \cref{fig:visual-reconstruction-seeds}; 
and similarly for CIFAR-10 in \cref{fig:cifar10-visual-reconstruction-seeds}.
\begin{figure}[H]
\centering
\includegraphics[width=0.48\linewidth]{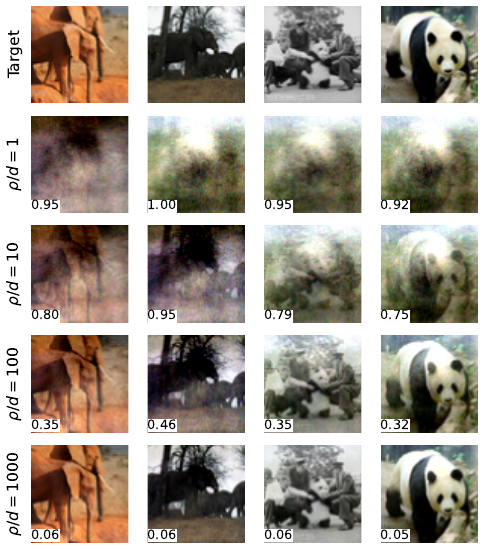}\hfill
\includegraphics[width=0.48\linewidth]{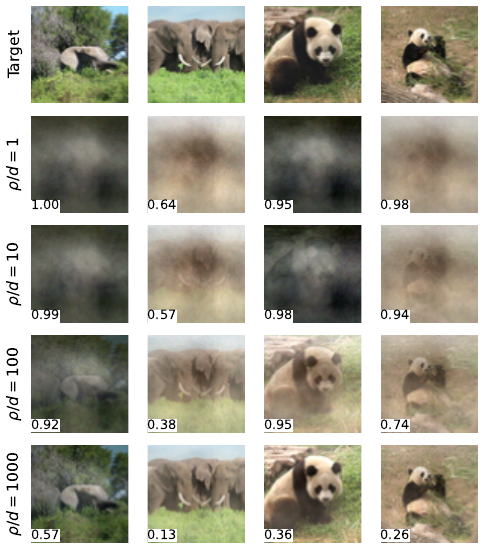}\par
\vspace{-0.5em}
\captionsetup[subfigure]{justification=centering,singlelinecheck=false}
\noindent\makebox[\linewidth][l]{%
\subcaptionbox{\label{fig:visual-imagenet-seed-401}}[0.48\linewidth]{\rule{0pt}{0pt}}\hfill
\subcaptionbox{\label{fig:visual-imagenet-seed-403}}[0.48\linewidth]{\rule{0pt}{0pt}}%
}
\caption{Additional ImageNet elephant-versus-panda reconstructions,
presented as in \cref{fig:visual-reconstruction}.}
\label{fig:visual-reconstruction-seeds}
\end{figure}
\newpage
\begin{figure}[H]
\centering
\includegraphics[width=0.45\linewidth]{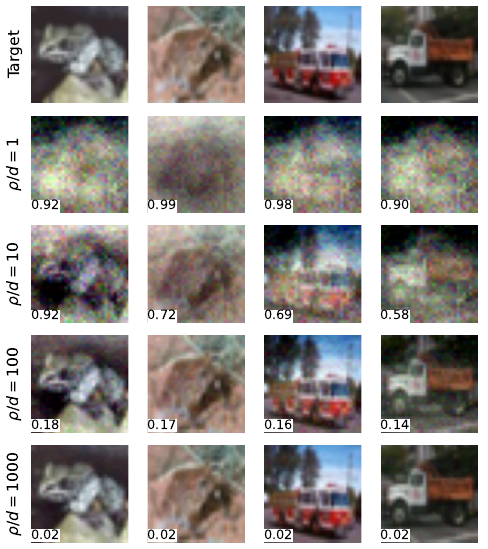}\hfill
\includegraphics[width=0.45\linewidth]{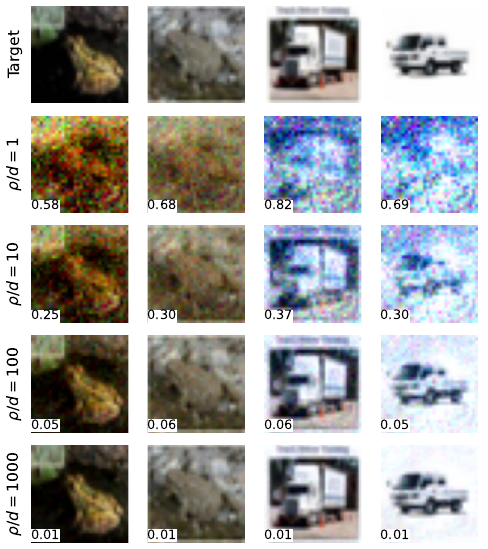}\par
\vspace{-0.5em}
\captionsetup[subfigure]{justification=centering,singlelinecheck=false}
\noindent\makebox[\linewidth][l]{%
\subcaptionbox{\label{fig:visual-cifar10-seed-401}}[0.45\linewidth]{\rule{0pt}{0pt}}\hfill
\subcaptionbox{\label{fig:visual-cifar10-seed-402}}[0.45\linewidth]{\rule{0pt}{0pt}}%
}
\par\medskip
\includegraphics[width=0.45\linewidth]{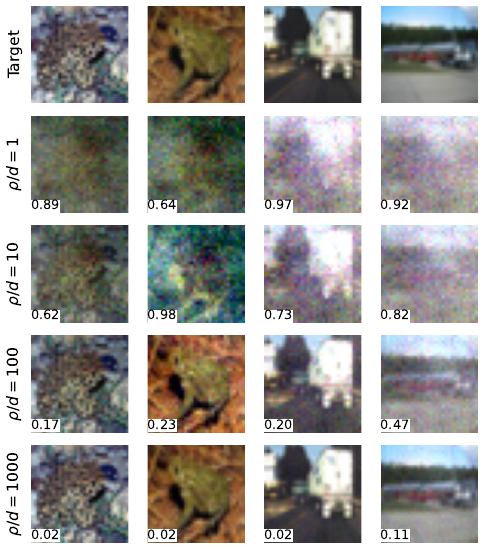}\par
\vspace{-0.5em}
\noindent\makebox[\linewidth][c]{%
\subcaptionbox{\label{fig:visual-cifar10-seed-403}}[0.45\linewidth]{\rule{0pt}{0pt}}%
}
\caption{CIFAR-10 frog-versus-truck reconstructions for all three
target seeds, presented as in \cref{fig:visual-reconstruction}.}
\label{fig:cifar10-visual-reconstruction-seeds}
\end{figure}

\newpage

\subsection{Run configurations per figure}
\label{app:figure-hyperparameters}

Each table below lists the configurations for each figure.
If a parameter had a large number of values, we summarize this as
the endpoints of the range, the number of values, and the increment, if it was a geometric range.

\begin{table}[H]
  \centering
  \footnotesize
  \begin{tabular}{ll}
    \toprule
    Setting & Synthetic \\
    \midrule
    Training examples $\numPoints$ & $1{,}000$ \\
    Validation examples & $10{,}000$ \\
    Test examples & $10{,}000$ \\
    Dimension $\dimInputs$ & $10$--$10{,}000$ (7) \\
    Feature norm $\|x\|_2$ & $\sqrt{\dimInputs}$ \\
    Feature clipping $\featureRadiusBound$ & $\sqrt{\dimInputs}$ \\
    Label noise $\labelStdDev$ & $0.5$ \\
    Privacy budget $\rho$ & $0.1$--$10^{5}$ (7, $\times 10$) \\
    Regularization $\lambda$ & $10^{-4}$--$10$ (11, $\times \sqrt{10}$) \\
    Residual clipping $\clipConstant$ & $10^{-3}$--$1{,}000$ (13, $\times \sqrt{10}$) \\
    \# of seeds & 3 \\
    \bottomrule
  \end{tabular}
  \caption{Configuration for \Cref{fig:combined-synth-viz}.}
  \label{tab:combined-synth-viz}
\end{table}

\begin{table}[H]
  \centering
  \footnotesize
  \begin{tabular}{ll}
    \toprule
    Setting & ImageNet (elephant/panda) \\
    \midrule
    Training examples $\numPoints$ & $2{,}160$ \\
    Validation examples & $270$ \\
    Test examples & $270$ \\
    Dimension $\dimInputs$ & $12$--$12{,}288$ (6, $\times 4$) \\
    Feature norm $\|x\|_2$ & $\sqrt{\dimInputs}$ \\
    Feature clipping $\featureRadiusBound$ & $\sqrt{\dimInputs}$ \\
    Privacy budget $\rho$ & $1$--$10^{5}$ (11, $\times \sqrt{10}$) \\
    Regularization $\lambda$ & $10^{-4}$--$1$ (9, $\times \sqrt{10}$) \\
    Residual clipping $\clipConstant$ & $10^{-3}$--$10$ (9, $\times \sqrt{10}$) \\
    \# of seeds & 3 \\
    Targets per class & 1 \\
    \bottomrule
  \end{tabular}
  \caption{Configuration for \Cref{fig:imagenet}. Hyperparameter selection used validation risk within $\lambda \geq 0.1$ and $\clipConstant \leq 0.01$.}
  \label{tab:imagenet}
\end{table}

\begin{table}[H]
  \centering
  \footnotesize
  \begin{tabular}{ll}
    \toprule
    Setting & ImageNet (elephant/panda) \\
    \midrule
    Training examples $\numPoints$ & $2{,}160$ \\
    Validation examples & $270$ \\
    Test examples & $270$ \\
    Dimension $\dimInputs$ & $12{,}288$ \\
    Feature norm $\|x\|_2$ & $\sqrt{\dimInputs}$ \\
    Feature clipping $\featureRadiusBound$ & $\sqrt{\dimInputs}$ \\
    Privacy budget $\rho$ & $1.23 \times 10^{3}$--$1.23 \times 10^{8}$ (6, $\times 10$) \\
    Regularization $\lambda$ & $1$ \\
    Residual clipping $\clipConstant$ & $0.125$ \\
    \# of seeds & 1 \\
    Targets per class & 2 \\
    \bottomrule
  \end{tabular}
  \caption{Configuration for \Cref{fig:visual-reconstruction}.}
  \label{tab:visual-reconstruction}
\end{table}

\begin{table}[H]
  \centering
  \footnotesize
  \begin{tabular}{lll}
    \toprule
    Setting & Synthetic rank-$s$ signal & CIFAR-10 (frog/truck) \\
    \midrule
    Training examples $\numPoints$ & $1{,}000$ & $8{,}000$ \\
    Validation examples & $2{,}000$ & $2{,}000$ \\
    Test examples & $2{,}000$ & $2{,}000$ \\
    Dimension $\dimInputs$ & $1{,}000$ & $3{,}072$ \\
    Feature norm $\|x\|_2$ & --- & $\sqrt{\dimInputs}$ \\
    Feature clipping $\featureRadiusBound$ & $3.17$--$31.6$ (7) & $\sqrt{\dimInputs}$ \\
    Label noise $\labelStdDev$ & $0.1$ & --- \\
    Privacy budget $\rho$ & $10$--$10{,}000$ (7, $\times \sqrt{10}$) & $10$--$10{,}000$ (10, $\times 2.15$) \\
    Regularization $\lambda$ & $10^{-4}$--$1$ (9, $\times \sqrt{10}$) & $3.07, 307, 3{,}072$ \\
    Residual clipping $\clipConstant$ & $10^{-4}$--$1$ (9, $\times \sqrt{10}$) & $0.1$ \\
    Attacker PCA rank $s$ & $10$--$1{,}000$ (7) & $10$--$1{,}000$ (7) \\
    Signal rank of the data & $10$--$1{,}000$ (7) & --- \\
    Bulk-to-signal ratio $\bulkRatio$ & $0.01$ & --- \\
    \# of seeds & 3 & 3 \\
    Targets per class & --- & 1 \\
    \bottomrule
  \end{tabular}
  \caption{Configuration for \Cref{fig:pca-reconstruction}. Hyperparameter selection used validation risk within $\lambda \geq 0.1$ and $\clipConstant \leq 0.1\labelStdDev$ for Synthetic rank-$s$ signal; $\lambda \geq 10$ and $\clipConstant \leq 0.1$ for CIFAR-10 (frog/truck).}
  \label{tab:pca-reconstruction}
\end{table}

\begin{table}[H]
  \centering
  \footnotesize
  \begin{tabular}{ll}
    \toprule
    Setting & ImageNet (elephant/panda) \\
    \midrule
    Training examples $\numPoints$ & $2{,}160$ \\
    Validation examples & $270$ \\
    Test examples & $270$ \\
    Dimension $\dimInputs$ & $12{,}288$ \\
    Feature norm $\|x\|_2$ & $\sqrt{\dimInputs}$ \\
    Feature clipping $\featureRadiusBound$ & $\sqrt{\dimInputs}$ \\
    Privacy budget $\rho$ & $1$--$10^{5}$ (11, $\times \sqrt{10}$) \\
    Regularization $\lambda$ & $1$--$1{,}000$ (7, $\times \sqrt{10}$) \\
    Residual clipping $\clipConstant$ & $0.01, 0.1$ \\
    Attacker PCA rank $s$ & $10$--$1{,}000$ (7) \\
    \# of seeds & 3 \\
    Targets per class & 1 \\
    \bottomrule
  \end{tabular}
  \caption{Configuration for \Cref{fig:pca-reconstruction-imagenet}. Hyperparameter selection used validation risk within $\lambda \geq 10$ and $\clipConstant \leq 0.01$.}
  \label{tab:pca-reconstruction-imagenet}
\end{table}

\begin{table}[H]
  \centering
  \footnotesize
  \begin{tabular}{ll}
    \toprule
    Setting & ImageNet (elephant/panda) \\
    \midrule
    Training examples $\numPoints$ & $2{,}160$ \\
    Validation examples & $270$ \\
    Test examples & $270$ \\
    Dimension $\dimInputs$ & $12{,}288$ \\
    Feature norm $\|x\|_2$ & $\sqrt{\dimInputs}$ \\
    Feature clipping $\featureRadiusBound$ & $\sqrt{\dimInputs}$ \\
    Privacy budget $\rho$ & $1.23 \times 10^{3}$--$1.23 \times 10^{8}$ (6, $\times 10$) \\
    Regularization $\lambda$ & $1$ \\
    Residual clipping $\clipConstant$ & $0.125$ \\
    \# of seeds & 2 \\
    Targets per class & 2 \\
    \bottomrule
  \end{tabular}
  \caption{Configuration for \Cref{fig:visual-reconstruction-seeds}.}
  \label{tab:visual-reconstruction-seeds}
\end{table}

\begin{table}[H]
  \centering
  \footnotesize
  \begin{tabular}{ll}
    \toprule
    Setting & CIFAR-10 (frog/truck) \\
    \midrule
    Training examples $\numPoints$ & $8{,}000$ \\
    Validation examples & $2{,}000$ \\
    Test examples & $2{,}000$ \\
    Dimension $\dimInputs$ & $3{,}072$ \\
    Feature norm $\|x\|_2$ & $\sqrt{\dimInputs}$ \\
    Feature clipping $\featureRadiusBound$ & $\sqrt{\dimInputs}$ \\
    Privacy budget $\rho$ & $307$--$3.07 \times 10^{7}$ (6, $\times 10$) \\
    Regularization $\lambda$ & $1$ \\
    Residual clipping $\clipConstant$ & $0.125$ \\
    \# of seeds & 3 \\
    Targets per class & 2 \\
    \bottomrule
  \end{tabular}
  \caption{Configuration for \Cref{fig:cifar10-visual-reconstruction-seeds}.}
  \label{tab:cifar10-visual-reconstruction-seeds}
\end{table}

\end{document}